\theoremstyle{definition}
\newtheorem{definition}{Definition}
\newtheorem{proposition}{Proposition}
\newtheorem{theorem}{Theorem}
\newtheorem{lemma}{\textbf{Lemma}}
\theoremstyle{remark}
\newtheorem{corollary}{Corollary}
\newcommand{\norm}[1]{\left\lVert#1\right\rVert}
\newcommand{\abs}[1]{\lvert#1\rvert}
\DeclareMathOperator*{\argmax}{arg\,max}
\DeclareMathOperator*{\argmin}{arg\,min}
\journal{Mathematics and Computers in Simulation}
\begin{document}

	\begin{frontmatter}
	\title{A Method to Improve the Performance of Reinforcement Learning Based on the $\mathcal{Y}$ Operator for a Class of Stochastic Differential Equation-Based Child-Mother Systems}
	
	\tnotetext[t1]{This paragraph of the first footnote will contain the date on 
		which you submitted your paper for review. }
	
	\author[1,2]{Cheng Yin \corref{cor1}}
	\ead{yinchenghust@hust.edu.cn}
	
	\author[3]{Yi Chen}
	\ead{chenyi@hust.edu.cn}

	\address[1]{The School of Mechanical Science and Engineering, Huazhong University of Science and Technology, Wuhan, Hubei, China}
	\address[2]{State Key Laboratory of Intelligent Manufacturing Equipment and Technology, Huazhong University of Science and Technology, Wuhan, Hubei, China}
	\address[3]{The China-EU Institute for Clean and Renewable Energy, Huazhong University of Science and Technology, Wuhan, Hubei, China}
	
	\cortext[cor1]{Corresponding author}
	%\fntext[fn1]{This is the first author footnote.}
	%\fntext[fn2]{This is the second author footnote.}
	%\fntext[fn3]{This is another author footnote.}

	% The paper headers
	%\markboth{Journal of \LaTeX\ Class Files,~Vol.~XX, No.~X, June~2023}%
	%{Shell \MakeLowercase{}: An Analysis Framework for Predictive Optimal Control}
	
	% \IEEEpubid{0000--0000/00\$00.00~\copyright~2021 IEEE}
	% Remember, if you use this you must call \IEEEpubidadjcol in the second
	% column for its text to clear the IEEEpubid mark.

	\begin{abstract}
		This paper introduces a novel operator, termed the $\mathcal{Y}$ operator, to elevate control performance in Actor-Critic (AC) based reinforcement learning for systems governed by stochastic differential equations (SDEs). The $\mathcal{Y}$ operator ingeniously integrates the stochasticity of a class of child-mother system into the Critic network’s loss function, yielding substantial advancements in the control performance of RL algorithms. Additionally, the $\mathcal{Y}$ operator elegantly reformulates the challenge of solving partial differential equations for the state-value function into a parallel problem for the drift and diffusion functions within the system's SDEs.  A rigorous mathematical proof confirms the operator's validity. This transformation enables the $\mathcal{Y}$ Operator-based Reinforcement Learning (YORL) framework to efficiently tackle optimal control problems in both model-based and data-driven systems. The superiority of YORL is demonstrated through linear and nonlinear numerical examples, showing its enhanced performance over existing methods post convergence.
	\end{abstract}
	
	\begin{keyword}
		Reinforcement Learning, Stochastic Differential Equation, Stochastic Optimal Control.
	\end{keyword}
	\end{frontmatter}

	\section{INTRODUCTION}
	The advent of OpenAI's Chat Generative Pre-trained Transformer (ChatGPT) marked a seminal moment in the commercialization of Large Language Models (LLMs), signaling the advent of AI's semantic imaging prowess. This breakthrough catalyzed a paradigm shift in AI research, steering focus towards the domain of decision-making. At the forefront of this domain is reinforcement learning, which, in the wake of escalating computational capabilities, has undergone a metamorphosis into deep reinforcement learning (DRL). The seminal paper \cite{mnih2015human} ignited an explosion of interest in DRL, spawning a proliferation of sophisticated algorithms such as Deep Deterministic Policy Gradient (DDPG)\cite{lillicrap2015continuous}, Trust Region Policy Optimization (TRPO)\cite{schulman2015trust}, Asynchronous Advantage Actor Critic (A3C)\cite{mnih2016asynchronous} and Proximal Policy Optimization (PPO)\cite{schulman2017proximal}. These innovations exemplify the renaissance in reinforcement learning research.
	
	DRL distinguishes itself by utilizing deep neural networks to distill features from systemic states\cite{bohmer2015autonomous} , thus enabling a seamless bridge between action and state spaces\cite{tan2019energy} and facilitating end-to-end system optimization. This method addresses the limitations inherent in traditional control mechanisms\cite{song2023reaching} and has seen extensive exploration and implementation in robotics\cite{salvato2021crossing} and autonomous vehicular control\cite{li2022decision}.
	
	Despite the advancements in reinforcement learning (RL), its application remains constrained by the exigencies of environmental modeling during the training phase. Both model-based and model-free algorithms presuppose a level of environmental determinism that belies the inherent randomness of real-world settings. This discordance reveals that proficiency within simulated environments does not necessarily translate to real-world efficacy, as highlighted in \cite{canese2021multi}. Addressing this disjunction, researchers are exploring avenues to obviate the need for precise environmental modeling. One school of thought, as elucidated in \cite{liu2021policy}, advocates for direct agent-environment interactions, either within the actual milieu or subsequent to simulated training. However, this presents considerable safety risks accentuated by the opaque nature of the neural networks governing end-to-end deep RL, which is in stark contrast to the mathematical transparency of traditional control methods. This concern has propelled research into safety reinforcement learning, with significant contributions by \cite{thananjeyan2021recovery}, \cite{stooke2020responsive} and \cite{cheng2019end}\cite{marvi2021safe}.
	
	An alternative approach posits the integration of environmental uncertainty into simulations, with the intention of fostering more adaptable agents. Techniques such as Gaussian processes\cite{engel2005reinforcement} and Stochastic Differential Equations (SDEs)\cite{yang2023parameters} are employed to model this uncertainty. Recent innovations leverage deep neural networks to ascertain the drift and diffusion terms of SDEs from extensive environmental data\cite{xu2022infinitely}\cite{yang2023neural}. This approach called neural stochastic differential equations(NSDEs) that has gained traction in diverse fields, from finance\cite{chen2021deep} to unmanned aircraft systems\cite{djeumou_how_2023} and autonomous driving\cite{qi_stochastic_2022}.

	Furthermore, the confluence of SDE-based models and RL methodologies is yielding novel solutions to optimal control challenges\cite{wang2020reinforcement}\cite{wang2020continuous}\cite{wang1812exploration}. The \cite{zhang2022online} advanced a stochastic model for n-player non-zero sum differential games, demonstrating convergence to Nash equilibria via Q-learning. The \cite{pirmorad2021deep} employed RL to modulate a 1-dimensional Stochastic Burgers’ equation, successfully dampening shock waves and mitigating sharp gradients. Moreover, \cite{chen_incremental_2019} introduced an AC based RL framework for SDE-modeled systems, proving the convergence of the Critic network and proposing a simplified surrogate function to streamline computations.
	
	SDEs have been adeptly employed to capture the inherent stochasticity of controlled systems, presenting a distinct advantage over traditional models predicated on Ordinary Differential Equations (ODEs). This approach, showcased in \cite{zhang2022online}\cite{pirmorad2021deep}\cite{chen_incremental_2019}, offers a more nuanced representation of real-world variability. Despite these advancements, two pertinent issues emerge:
	
	\textbf{Problem I}:  The integration of stochasticity in SDE-based models does not extend to the design of the state-value function estimation in AC based reinforcement learning architectures. 
	
	Traditional deep RL algorithms design the Critic network's loss function to minimize the Mean Square Error (MSE) between the estimated value function at the current and subsequent time steps, as represented by the equations:
	\begin{equation}
		\label{loss tradition}
		\begin{aligned}
			\mathcal{L}(\theta) &= MSE\left(V^{\theta}\left(s_t\right), r_t+\gamma V^{\theta}\left(s'_t\right)\right),\\
			\mathcal{L}(\theta) &= MSE\left(Q^{\theta}\left(s_t,a_t\right), r_t+\gamma Q^{\theta}\left(s'_t,a'_t\right)\right)
		\end{aligned}
	\end{equation}
	
	In studies such as those by  \cite{zhang2022online} and \cite{pirmorad2021deep}, stochasticity is confined to the controlled system's model, with the design of RL algorithms—both Actor and Critic networks—failing to accommodate this variability. Consequently, the optimization of policy relies heavily on the generalization ability of RL methods. Moreover, \cite{chen_incremental_2019} simplifies the Critic network's loss function, derived from the It\^{o} diffusion process's characteristic operator, to a surrogate function similar to the traditional form shown in (\ref{loss tradition}) for computational efficiency, thereby overlooking the system's stochastic aspects. \cite{wang2020continuous} also primarily concentrate on the Actor network's design while employing SDEs to model the system.

	The disregard for the controlled system's stochasticity in the design of the Critic network's loss function can significantly impact its convergence, warranting a critical re-evaluation in order to align the RL algorithms with the complexity of SDE-modeled environments.

	\textbf{Problem II}: The utilization of SDEs in the modeling of control systems, as evidenced in works like \cite{chen_incremental_2019} , imposes stringent continuity conditions on the state-value function—specifically, the necessity for at least second-order continuous and $\alpha$-order H$\ddot{\text{o}}$lder continuity. This prerequisite stems from the optimization methodologies, such as the Hamilton-Jacobi-Bellman (HJB) equation or reinforcement learning algorithms, which require the state-value function's partial derivatives for optimal policy determination. This requirement restricts the effectiveness of these methods in two distinct cases:
	
	\begin{itemize}
		\item Case1: 
		The first limitation arises when the Critic network's activation function, such as the ReLU function, does not fulfill the continuity criteria. This constraint hampers the method's efficacy and limits the freedom in activation function selection, a flexibility researchers seek to retain in algorithm design.
		\item Case2: 
		In multi-objective optimization scenarios, the reward function may not be well-defined, and researchers often wish to utilize existing datasets from previous reinforcement learning agents trained on similar problems. These datasets might not satisfy the continuity prerequisites and may not even be sequentially ordered—a situation particularly relevant to inverse reinforcement learning and offline reinforcement learning.
	\end{itemize}
	
	Addressing these challenges, this paper introduces an alternative operator, referred to as the $\mathcal{Y}$ operator, that is functionally equivalent to the characteristic operator of the It\^{o} diffusion process. A rigorous proof leveraging the Kolmogorov forward equation and Gaussian distribution properties is provided in the \ref{Theory} section. Utilizing the $\mathcal{Y}$ operator within the AC framework, we design a reinforcement learning controller for the child-mother system detailed in the \ref{Model}  section.
	
	The $\mathcal{Y}$ operator not only resolves the issue of disregarding the controlled system's stochasticity in value function estimation, as stated in \textbf{Problem I}, but also circumvents the need for computing the value function's partial derivatives when deriving optimal policy, as highlighted in \textbf{Problem II}. This is achieved by $\mathcal{Y}$ operator transforming the problem of calculating the value function's partial derivatives into one of computing the drift and diffusion terms' partial derivatives in the system's describing SDEs. It provides insightful solutions for the special tasks mentioned above, including those requiring diverse activation function choices in neural network design, inverse reinforcement learning tasks, and problems agnostic to reward functions.

	The child-mother system, widely applicable in autonomous driving, is formulated as follows:
	\begin{equation}
		\begin{aligned}
			dz_t &= \mathcal{F}(z_t,u_t)dt, \\
			dw_t &= \mathcal{G}(w_t, z_t)dt \\
		\end{aligned}
	\end{equation}	
	where $z_t$ is the child-system state, $z_t$ is affected only by the child-system state $z_t$ and the control input $u_t$, and $w_t$ is the mother-system state, $w_t$ is affected only by the child-system state $z_t$ and the mother-system state $w_t$.  This type of system has a wide prospect in the field of autonomous driving, where the child-system can be considered as the ego vehicle that can be controlled and the mother-system as all surrounding vehicles excluding the ego vehicle. Most of the existing studies in the field of autonomous driving divide the child-mother system into two systems\cite{gu2022integrated}, the child-system (i.e., the ego vehicle in autonomous driving) is modeled by a two-wheeled bicycle model and a non-linear tire model\cite{snider2009automatic}, and the mother-system (i.e., the surrounding vehicles in autonomous driving) is modeled by the Intelligent Driver Model(IDE)\cite{treiber2000congested}, the Minimizing Overall Braking Induced by Lane change(MOBIL) model\cite{treiber2016mobil}, etc. However, this type of approach does not reasonably consider the stochasticity in the real model, and in the \ref{Model} section of this article, this child-mother system is modeled using data-driven NSDEs, which fully considers the uncertainty in the real environment and also gives how to calibrate the deep neural network parameters of this child-mother system using a large amount of data collected in a realistic environment.
	
	This paper proposes a novel reinforcement learning framework called $\mathcal{Y}$ Operator-based Reinforcement Learning (YORL). It contrasts this with the Traditional Stochastic Reinforcement Learning (TSRL) methodology, where the Critic network’s loss function is conventionally designed. YORL employs the $\mathcal{Y}$ operator to formulate a novel Critic network loss function for the child-mother system, as explicated in the \ref{rl design} section. The \ref{simulation} section compares the YORL and TSRL methods' performance on this this class of child-mother systems.
	
	In summary, this paper consists of the following sections. \ref{preliminaries} contains some theoretical knowledge in SDEs, which will be used in the subsequent deductions and proofs. \ref{Theory} contains the theoretical part, which proposes the $\mathcal{Y}$ operator that is equivalent to the characteristic operator of the It\^{o} diffusion process, gives a rigorous proof of the equivalence of these two operators by using the Kolmogorov forward equation and the properties of the Gaussian distribution, and finally gives two propositions of the $\mathcal{Y}$ operator. The proof of these propositions  will be shown in the Appendix. The system studied in this paper, a special class of child-mother systems, is described in \ref{Model}.  \ref{Model} details the modeling of this child-mother system using SDEs, and gives a extremely trivial method to calibrate this stochastic model using data. In \ref{rl design}, the $\mathcal{Y}$ operator proposed in \ref{Theory} is applied to the design of the loss function of the Critic network for reinforcement learning, and build a noval reinforcement learning framework based on $\mathcal{Y}$ operator(i.e., YORL). In \ref{simulation}, the reinforcement learning framework designed in \ref{rl design} is simulated in linear and nonlinear child-mother system, respectively, to solve the optimal control problem and compared with the TSRL method. \ref{conclusion} summarize the full paper and provide some outlooks for the future.

	\section{PRELIMINARIES}
	\label{preliminaries}
	For the purpose of subsequent deduction and proof, it is necessary to first give some definitions and lemmas from stochastic analysis. 
	\begin{definition} \{\cite{evans2012introduction}Section4.2: The future information\}
		
		\label{the future information}
		Let  $W(\cdot)$ be a $m$-dimensional Brownian motion defined on some probability space $(\Omega, \mathcal{F},P)$.
		The $\sigma$-algebra
		\begin{equation}
			\mathcal{W}^+(t):=\mathcal{F}\left(W\left(s\right)-W\left(t\right) \vert s\geq t\right)
		\end{equation}
		is the future information of the Brownian motion beyond time $t$.
	\end{definition}
	
	\begin{definition} \{\cite{evans2012introduction}Section4.2: Filtration\}
		
		\label{Filtration}
		In a probability space $(\Omega, \mathcal{F},P)$, a family $\{\mathcal{F}_t,t\geq 0\}$ of $\sigma$-algebras have $\forall t\geq 0, \mathcal{F}_t\subset \mathcal{F}$. $\{\mathcal{F}_t,t\geq 0\}$ is called  \textit{filtration} if there have $\forall 0\leq t\leq s, \mathcal{F}_t\subset \mathcal{F}_s$                                                                                         
	\end{definition}
	\begin{definition}\{\cite{evans2012introduction}Section4.2: Progressive process\}
		
		\label{Progressive process}
		In a probability space $(\Omega, \mathcal{F},P)$, there have a filtration $\mathcal{F}_t$ on this probability space. The mapping $G:\Omega\times[0,T] \rightarrow \mathbb{R}^n$ is a progressive process iff $G(\omega,t)$ is measurable in space $\mathcal{F}_t\times\mathcal{B}([0,T])$, $\forall t\in [0,T]$.
	\end{definition}
	\begin{definition}\{\cite{evans2012introduction}Section4.2: Two special spaces\}
		
		\label{Two special spaces}
		(i)  $\mathbb{L}^2_n(0,T)$ is the space of all real-valued, progressive processes $G: \Omega\times[0,T]\rightarrow \mathbb{R}^n$ such that
		\begin{equation}
			E\left(\int_{0}^{T}G^2dt\right)<\infty. 
		\end{equation}
		(ii) Likewise, $\mathbb{L}^1_n(0,T)$ is the space of all real-valued, progressive processes $F: \Omega\times[0,T]\rightarrow \mathbb{R}^n$ such that
		\begin{equation}
			E\left(\int_{0}^{T}\abs{F}dt\right)<\infty.
		\end{equation}
	\end{definition}
	Definition  \ref{the future information}$\sim$\ref{Two special spaces} will be used in Lemma \ref{existence and uniqueness} and are fundamentals to the subsequent description of SDEs.
	\begin{lemma}\{\cite{evans2012introduction}section5.2: Existence and uniqueness of solution of stochastic differential equation\}
		\label{existence and uniqueness}
		
		Support that $h:\mathbb{R}^n\times[0,T]\rightarrow\mathbb{R}^n$ and $H:\mathbb{R}^n\times[0,T]\rightarrow\mathbb{R}^{n\times m}$ are continuous and satisfy the following conditions:
		\begin{equation}
			(i)
			\begin{cases}
				&\forall t\in [0,T], x,\hat{x}\in \mathbb{R}^n, \exists L\in \mathbb{R} \text{ such that}\\
				\qquad &\abs{h(x,t)-h(\hat{x},t)}\leq L\abs{x-\hat{x}},\\
				&\abs{H(x,t)-H(\hat{x},t)}\leq L\abs{x-\hat{x}},\\
			\end{cases}
		\end{equation}
		\begin{equation}
			(ii)
			\begin{cases}
				&\forall t\in [0,T], x\in \mathbb{R}^n, \exists K \in \mathbb{R}\text{ such that}\\
				\qquad &\abs{h(x,t)}\leq K(1+\abs{x}),\\
				&\abs{H(x,t)}\leq K(1+\abs{x}),\\
			\end{cases}
		\end{equation}
		Let $X_0$ be any $ \mathbb{R}^n$-valued random variable such that
		\begin{equation}
			(iii)\qquad E\left(\abs{X_0}^2\right)<\infty
		\end{equation}
		and
		\begin{equation}
			(iv)
			\begin{cases}
				\qquad&X_0 \text{ is independent of } \mathcal{W}^+(0),\\
				&\mathcal{W}^+(0):=\mathcal{F}(W(s)-W(0) \vert s\geq 0).
			\end{cases}
		\end{equation}
		where $W(\cdot)$ is a given $m$-dimensional Brownian motion.
		
		Then there exists a unique solution $X \in \mathbb{L}^2_n(0,T)$ of the SDE:
		\begin{equation}
			\begin{cases}
				dX = h(X,t)dt + H(X,t)dW \qquad 0\leq t\leq T,\\
				X(0) = X_0.
			\end{cases}
		\end{equation}
	\end{lemma}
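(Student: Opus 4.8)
The plan is to prove existence and uniqueness by the classical method of successive (Picard) approximations, with the stochastic integrals controlled through the It\^{o} isometry, Doob's maximal inequality, and Gronwall's inequality. I would treat uniqueness and existence separately, since uniqueness is the shorter of the two and already exhibits the main estimates.

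For uniqueness, suppose $X,\hat{X}\in\mathbb{L}^2_n(0,T)$ both solve the SDE with the same initial datum $X_0$. Writing their difference in integral form, applying $(a+b)^2\le 2a^2+2b^2$, bounding the drift term by the Cauchy--Schwarz inequality and the diffusion term by the It\^{o} isometry, and invoking the Lipschitz condition (i), I obtain
\[
E\!\left(\abs{X(t)-\hat{X}(t)}^2\right)\le C\int_0^t E\!\left(\abs{X(s)-\hat{X}(s)}^2\right)ds,
\]
with $C$ depending only on $L$ and $T$. Gronwall's inequality forces the left side to vanish for every $t$, and continuity of the paths upgrades this to indistinguishability of $X$ and $\hat{X}$.

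For existence I would set $X^0(t):=X_0$ and define the iterates
\[
X^{k+1}(t):=X_0+\int_0^t h(X^k(s),s)\,ds+\int_0^t H(X^k(s),s)\,dW(s).
\]
The growth condition (ii) together with (iii) shows inductively that each $X^k\in\mathbb{L}^2_n(0,T)$. The central estimate, obtained from (i), the It\^{o} isometry, and Doob's maximal inequality applied to the martingale part, is
\[
E\!\left(\sup_{0\le s\le t}\abs{X^{k+1}(s)-X^k(s)}^2\right)\le C\int_0^t E\!\left(\abs{X^k(r)-X^{k-1}(r)}^2\right)dr.
\]
Iterating this recursion produces the factorial decay $E(\sup_{0\le s\le T}\abs{X^{k+1}(s)-X^k(s)}^2)\le (CT)^k M/k!$, so the increments are summable; a Borel--Cantelli argument then yields almost-sure uniform convergence of $X^k$ to a continuous, adapted limit $X$, which inherits membership in $\mathbb{L}^2_n(0,T)$.

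The final step passes to the limit inside the two integrals: Lipschitz continuity makes the drift integral converge in $L^1$ and, via the It\^{o} isometry, the stochastic integral converge in $L^2$, so that $X$ satisfies $X(t)=X_0+\int_0^t h(X,s)\,ds+\int_0^t H(X,s)\,dW$. I expect the main obstacle to be the bookkeeping in the Cauchy estimate: one must pair each integral with the correct inequality (Cauchy--Schwarz for the drift, It\^{o} isometry for the diffusion) so that the recursion closes into the summable factorial series, and one must take the supremum inside the expectation before applying Doob's inequality, since it is precisely the uniform-in-time control that delivers a pathwise continuous solution rather than a merely pointwise one.
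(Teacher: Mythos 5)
The paper does not prove this lemma itself; it cites it directly from Evans (Section 5.2), where the proof is precisely the classical Picard iteration argument you outline. Your proposal is correct and matches that canonical proof in both structure and key tools (Gronwall for uniqueness; It\^{o} isometry, Doob's maximal inequality, factorial decay, and Borel--Cantelli for existence), so there is no substantive difference to report.
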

	Lemma \ref{existence and uniqueness} will be used in the \ref{Model} section of the article to calibrate the parameters of NSDEs as a gradient penalty term in the loss function.
	\section{THEORY}
	\label{Theory}
	Suppose a SDE:
	\begin{equation}
		\label{op SDE}
		dX_t = h(X_t)dt+H(X_t)dW,
	\end{equation}
	with $h(X_t)\in \mathbb{L}^1_n(0,T)$, $H(X_t)\in \mathbb{L}^2_n(0,T)$. According to lemma \ref{existence and uniqueness}, the solution of the SDE in (\ref{op SDE}) exists and is unique.
	
	\begin{lemma}\{\cite{evans2012introduction}section4.4: It\^{oz}'s chain rule\}
		
		\label{lemma ito}
		Considering a mapping $\Psi:\mathbb{R}^n\rightarrow \mathbb{R}$ is a continuous, with continuous partial derivatives $\frac{\partial\Psi(X_t)}{\partial X_{t,i}}$ and $\frac{\partial^2\Psi(X_t)}{\partial x_i\partial x_j}$ mapping, for $i, j = 1,2,\dots , n$.
		The It\^{o} chain rule is
		\begin{equation}
			\label{ito eq expand}
			\begin{aligned}
				d\Psi(X_t) = &\sum_{i=1}^{n}\frac{\partial\Psi(X_t)}{\partial X_{t,i}}(h_i(X_t)dt+(H(X_t)dW)_i)\\
				&+\frac{1}{2}\sum_{i,j=1}^{n} \frac{\partial^2\Psi(X_t)}{\partial X_{t,i}\partial X_{t,j}}(H(X_t)H(X_t)^T)_{i,j}dt,
			\end{aligned}
		\end{equation}
		where $(\cdot)_{i}$, $(\cdot)_{i,j}$ are the element of row $i$ of the matrix and the element of column $j$ of row $i$ of the matrix, respectively.
		
		Meanwhile, the expectation of (\ref{ito eq expand}) can be written as
		\begin{equation}
			\label{ito chain org}
			\begin{aligned}
				\mathbb{E}\{\frac{d\Psi(X_t)}{dt}\} &=\mathbb{E}\{  \sum_{i=1}^{n}\frac{\partial\Psi(X_t)}{\partial X_{t,i}} h_i(X_t)\\
				&+\frac{1}{2}\sum_{i,j=1}^{n}\frac{\partial^2\Psi(X_t)}{\partial X_{t,i}\partial X_{t,j}}(H(X_t)H(X_t)^T)_{i,j}  \}.
			\end{aligned}
		\end{equation}
	\end{lemma}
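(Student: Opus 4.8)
The plan is to establish the differential identity (\ref{ito eq expand}) by a second-order Taylor expansion of $\Psi$ along a shrinking partition of $[0,t]$, in which the stochastic increments are controlled through the quadratic-variation structure of Brownian motion, and then to obtain the expectation identity (\ref{ito chain org}) by invoking the martingale property of the It\^{o} integral. Here $\partial_i\Psi$ and $\partial^2_{ij}\Psi$ abbreviate the first- and second-order partials appearing in the statement.

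First I would fix a partition $0=t_0<t_1<\cdots<t_N=t$ and write the telescoping sum
\begin{equation}
\Psi(X_t)-\Psi(X_0)=\sum_{k=0}^{N-1}\left(\Psi(X_{t_{k+1}})-\Psi(X_{t_k})\right).
\end{equation}
To each summand I would apply a second-order Taylor expansion of $\Psi$ about $X_{t_k}$, producing a first-order term $\sum_i\partial_i\Psi(X_{t_k})\,\Delta X_{t_k,i}$, a second-order term $\tfrac12\sum_{i,j}\partial^2_{ij}\Psi(X_{t_k})\,\Delta X_{t_k,i}\Delta X_{t_k,j}$, and a Taylor remainder. Substituting the increment $\Delta X_{t_k}\approx h(X_{t_k})\Delta t_k+H(X_{t_k})\Delta W_{t_k}$ dictated by (\ref{op SDE}), the first-order term splits into a Riemann sum converging to $\int_0^t\sum_i\partial_i\Psi\,h_i\,ds$ and an It\^{o} sum converging to the stochastic integral $\int_0^t\sum_i\partial_i\Psi\,(H\,dW)_i$.

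The crux is the second-order term. Expanding $\Delta X_{t_k,i}\Delta X_{t_k,j}$, the contributions carrying a factor $\Delta t_k$ are of order $\Delta t_k^2$ or $\Delta t_k\,\Delta W_{t_k}$ and vanish in the mean-square limit, while the surviving piece is $(H\Delta W_{t_k})_i(H\Delta W_{t_k})_j=\sum_{p,q}H_{ip}H_{jq}\,\Delta W_{t_k,p}\Delta W_{t_k,q}$. The key fact is that, for independent Brownian components, $\Delta W_{t_k,p}\Delta W_{t_k,q}$ converges (in $L^2$, once summed against the continuous integrand over the partition) to $\delta_{pq}\,\Delta t_k$, so that $\sum_{p,q}H_{ip}H_{jq}\delta_{pq}=(HH^T)_{ij}$. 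This produces the It\^{o} correction $\tfrac12\int_0^t\sum_{i,j}\partial^2_{ij}\Psi\,(HH^T)_{ij}\,ds$ and, on passing to the limit, establishes the differential form (\ref{ito eq expand}).

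The main obstacle will be the rigorous justification of this quadratic-variation limit together with the control of the Taylor remainder: one must show that the cross terms and the third-order remainder sum to zero in $L^2$ as the mesh tends to zero, which relies on the continuity and local boundedness of $\Psi$ and its first two derivatives combined with the moment bound $X\in\mathbb{L}^2_n(0,T)$ furnished by Lemma \ref{existence and uniqueness}. Finally, to derive the expectation identity (\ref{ito chain org}), I would pass to the integral form of (\ref{ito eq expand}) and take expectations; under the integrability afforded by $H(X_t)\in\mathbb{L}^2_n(0,T)$ the It\^{o} integral $\int_0^t\sum_i\partial_i\Psi\,(H\,dW)_i$ is a square-integrable martingale of zero mean, so the $dW$ contribution is annihilated, leaving $\mathbb{E}[\Psi(X_t)-\Psi(X_0)]$ equal to $\int_0^t\mathbb{E}[\sum_i\partial_i\Psi\,h_i+\tfrac12\sum_{i,j}\partial^2_{ij}\Psi\,(HH^T)_{ij}]\,ds$. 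Differentiating in $t$ and interchanging expectation and derivative then yields precisely (\ref{ito chain org}).
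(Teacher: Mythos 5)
The paper itself offers no proof of this lemma: it is imported as a quoted result from Evans (\cite{evans2012introduction}, section 4.4), so there is no internal argument to compare yours against. Your sketch is the standard textbook derivation of It\^{o}'s formula --- essentially the one in Evans --- via a telescoping sum over a partition, second-order Taylor expansion, the quadratic-variation identity $\Delta W_{t_k,p}\Delta W_{t_k,q}\to\delta_{pq}\,\Delta t_k$ in $L^2$, and then annihilation of the stochastic integral under expectation. The outline is sound, and you correctly identify where the real work lies (the $L^2$ control of cross terms and the Taylor remainder).

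Two technical points would need tightening in a full write-up. First, $\Psi\in C^2$ gives only \emph{local} boundedness of $\partial_i\Psi$ and $\partial^2_{ij}\Psi$, so the mean-square limits you invoke are not justified directly for unbounded derivatives; the standard repair is localization by stopping times $\tau_R=\inf\{t:\abs{X_t}\ge R\}$, proving the identity up to $t\wedge\tau_R$ and letting $R\to\infty$. Second, for the expectation identity (\ref{ito chain org}), the zero-mean property of the It\^{o} integral requires the \emph{full} integrand $\sum_i\partial_i\Psi(X_t)\,(H(X_t))_{i\,\cdot}$ to be square-integrable, which does not follow from $H(X_t)\in\mathbb{L}^2_n(0,T)$ alone as you claim --- again this is supplied by localization (or by growth assumptions on $\Psi$), after which the interchange of $\frac{d}{dt}$ and $\mathbb{E}$ follows by dominated convergence. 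With those two repairs your argument is a complete and correct proof of the quoted lemma.
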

	
	\begin{definition}\{\cite{oksendal2003stochastic}section7.5: Characteristic Operators of It\^{o} Diffusion Process\}
		\label{characterization operator}
		
		Considering Lemma \ref{lemma ito}, introduce a characteristic operator $\mathcal{A}_X$ for the It\^{o} diffusion process.
		For a stohastic process $\{X_t\}$ in (\ref{op SDE}) and a mapping $\Psi$ in (\ref{ito eq expand}), characteristic operator $\mathcal{A}_X$ can be defined as 
		\begin{equation}
			\mathcal{A}_X\Psi(x) = \lim\limits_{t\downarrow0}\frac{\mathbb{E}\left[\Psi(X_t)\right]-\Psi(x)}{t}, \quad x\in \mathbb{R}^n
		\end{equation}
		Equivalently, characteristic operator $\mathcal{A}_X$ can be formulated as
		\begin{equation}
			\begin{aligned}
				\mathcal{A}_X(\cdot) &= \sum_{i=1}^{n}\frac{\partial(\cdot)}{\partial X_{t,i}} h_i(X_t)\\
				&+\frac{1}{2}\sum_{i,j=1}^{n}\frac{\partial^2(\cdot)}{\partial X_{t,i}\partial X_{t,j}}\left(H(X_t)H(X_t)^T\right)_{i,j}.
			\end{aligned}
		\end{equation}
		Likewise, let $\mathcal{A}^*_X$ be the dual of $\mathcal{A}_X$
		\begin{equation}
			\begin{aligned}
				\mathcal{A}^*_X(\cdot) &= -\sum_{i=1}^{n}\frac{\partial\left[\left(\cdot\right)h_i\left(X_t\right)\right]}{\partial X_{t,i}}\\
				&+\frac{1}{2}\sum_{i,j=1}^{n}\frac{\partial^2\left[\left(\cdot\right)\left(H\left(X_t\right)H\left(X_t\right)^T\right)_{i,j}\right]}{\partial X_{t,i}\partial X_{t,j}}.
			\end{aligned}
		\end{equation}
	\end{definition}
	\begin{corollary}\{\cite{oksendal2003stochastic}section7.3: The generator of It\^{o} Diffusion\}
		\label{characterization operator1}
		
		Assuming a random variable $B$, the probability density function of it is denoted as $p(b)$. Denoted
		\begin{equation}
			<B, p(b)> = \mathbb{E}\{B\} = \int_{\mathbb{R}}Bp(b)db
		\end{equation}
		as the expectation of the random variable $B$.
		It's trivial that the (\ref{ito chain org}) can be reformulated as
		\begin{equation}
			\mathbb{E}\{\frac{d\Psi(X_t)}{dt}\}  = <\mathcal{A}_X\Psi(X_t), p(X_t)>
		\end{equation}
		where $p(X_t)$ is the probability density function of random variable $X_t$ in (\ref{op SDE}).
	\end{corollary}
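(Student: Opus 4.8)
The plan is to recognize that the asserted identity is a direct consequence of matching the integrand in (\ref{ito chain org}) against the definition of the characteristic operator $\mathcal{A}_X$ from Definition \ref{characterization operator}, followed by a single application of the inner-product notation introduced in the corollary statement itself. In other words, the result is a bookkeeping reformulation rather than a substantive theorem, consistent with the remark that it is ``trivial''.

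First I would observe that the expression appearing inside the expectation on the right-hand side of (\ref{ito chain org}), namely
$$\sum_{i=1}^{n}\frac{\partial\Psi(X_t)}{\partial X_{t,i}} h_i(X_t) + \frac{1}{2}\sum_{i,j=1}^{n}\frac{\partial^2\Psi(X_t)}{\partial X_{t,i}\partial X_{t,j}}\left(H(X_t)H(X_t)^T\right)_{i,j},$$
is, by Definition \ref{characterization operator}, precisely $\mathcal{A}_X\Psi(X_t)$. Substituting this identification into (\ref{ito chain org}) immediately gives
$$\mathbb{E}\left\{\frac{d\Psi(X_t)}{dt}\right\} = \mathbb{E}\left\{\mathcal{A}_X\Psi(X_t)\right\}.$$

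Next I would unpack the expectation on the right. Since $\mathcal{A}_X\Psi(X_t)$ is a deterministic function of the random variable $X_t$, the law of the unconscious statistician allows its expectation to be written as the integral of that function against the density $p(X_t)$ of $X_t$. This is exactly the inner-product notation $<\cdot,\cdot>$ defined in the corollary, so $\mathbb{E}\{\mathcal{A}_X\Psi(X_t)\} = <\mathcal{A}_X\Psi(X_t), p(X_t)>$, which is the claimed reformulation.

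There is no genuine obstacle here. The only point deserving a line of care is confirming that $\mathbb{E}\{\mathcal{A}_X\Psi(X_t)\}$ is the expectation of a function of $X_t$ \emph{alone}, so that integrating against the single density $p(X_t)$ is legitimate; this holds because $h$, $H$, and the partial derivatives of $\Psi$ are all evaluated at the state $X_t$, making the integrand depend on the sample point $\omega$ only through $X_t$. With that noted, the two displayed equalities chain together to yield the statement.
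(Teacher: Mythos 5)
Your proposal is correct and follows exactly the reasoning the paper itself relies on: the paper states this corollary as a ``trivial'' reformulation, and your two steps---identifying the integrand of (\ref{ito chain org}) with $\mathcal{A}_X\Psi(X_t)$ via Definition \ref{characterization operator}, then rewriting the expectation as an integral against the density $p(X_t)$ using the inner-product notation---are precisely the bookkeeping that makes it so. Nothing is missing; your added remark about the integrand depending on $\omega$ only through $X_t$ is a reasonable point of care that the paper leaves implicit.
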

	
	This corollary is the cornerstone of the subsequent definition of the $\mathcal{Y}$ operator and the design of the loss function for the Critic network in Section \ref{rl design}. It transforms the problem of solving for the derivative of the function $\Psi(X_t)$ with respect to time $t$ into the problem of solving for the partial derivative of the function $\Psi(X_t)$ with respect to its independent variables. This is a very meaningful transformation.
	\begin{corollary}\{\cite{ludvigsson2013kolmogorov}section3.2: Kolmogorov Forward Equation\}
		\label{characterization operator2}
		
		According to Kolmogorov forward equation\cite{ludvigsson2013kolmogorov},  the characteristic operator $\mathcal{A}_X$ and the dual form $\mathcal{A}^*_X$ satisfy the following relationship
		\begin{equation}
			<\mathcal{A}_X\Psi(X_t), p(X_t)> = <\Psi(X_t), \mathcal{A}^*_Xp(X_t)>
		\end{equation}
	\end{corollary}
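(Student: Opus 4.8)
The plan is to recognize this identity as the statement that $\mathcal{A}^*_X$ is precisely the formal $L^2$-adjoint of $\mathcal{A}_X$, and to establish it by integration by parts. Writing the pairing explicitly as $\langle \mathcal{A}_X\Psi, p\rangle = \int_{\mathbb{R}^n} (\mathcal{A}_X\Psi)(x)\,p(x)\,dx$, I would split this integral into the contribution from the first-order (drift) part and the second-order (diffusion) part of $\mathcal{A}_X$ as given in Definition \ref{characterization operator}, and transform each separately so that all derivatives land on $p$ rather than on $\Psi$.

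First I would treat the drift contribution $\int \sum_i (\partial_{x_i}\Psi)\,h_i\,p\,dx$. Integrating by parts once in the variable $x_i$ moves the derivative off $\Psi$ and onto the product $h_i p$, producing $-\int \sum_i \Psi\,\partial_{x_i}(h_i p)\,dx$ together with a boundary term. This matches exactly the drift part of $\mathcal{A}^*_X p$. Next I would treat the diffusion contribution $\frac12\int \sum_{i,j}(\partial^2_{x_i x_j}\Psi)\,(HH^T)_{ij}\,p\,dx$. Here I integrate by parts twice—once in $x_i$ and once in $x_j$—so that both derivatives are transferred onto the coefficient $(HH^T)_{ij}\,p$, yielding $\frac12\int \sum_{i,j}\Psi\,\partial^2_{x_i x_j}\!\big((HH^T)_{ij}\,p\big)\,dx$, again up to boundary terms. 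Summing the two transformed contributions and comparing the result with the definition of $\mathcal{A}^*_X$ gives $\int \Psi\,(\mathcal{A}^*_X p)\,dx = \langle \Psi, \mathcal{A}^*_X p\rangle$, which is the claimed identity.

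The main obstacle—and the step that genuinely requires care rather than routine bookkeeping—is justifying that every boundary term generated by these integrations by parts vanishes. This rests on decay and integrability behavior at infinity: since $p$ is a probability density it decays, but one must guarantee that the products $\Psi\,h_i\,p$ and $\Psi\,(HH^T)_{ij}\,p$, along with the relevant first derivatives, tend to zero as $|x|\to\infty$ (equivalently, one may restrict to test functions $\Psi$ of compact support and extend by density). Under the twice-continuous-differentiability assumed for $\Psi$ in Lemma \ref{lemma ito} and the at-most-linear growth bounds on $h$ and $H$ inherited from Lemma \ref{existence and uniqueness}, these boundary contributions drop out and the formal adjoint computation becomes rigorous, delivering the Kolmogorov-forward duality as stated.
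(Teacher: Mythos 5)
The paper contains no proof of this corollary: it is invoked purely by citation to the Kolmogorov-forward-equation reference, and its only role in the paper is as the starting point of the proof of Theorem \ref{operator equal theorem}, where $<\Psi(X_t),\mathcal{A}^*_Xp(X_t)>$ is expanded directly. Your proposal therefore supplies an argument the paper omits rather than paralleling one it gives, and the argument is the correct, standard one: a single integration by parts converts the drift contribution $\int\sum_i(\partial_{x_i}\Psi)\,h_i\,p\,dx$ into $-\int\Psi\sum_i\partial_{x_i}(h_ip)\,dx$, a double integration by parts converts the diffusion contribution $\frac{1}{2}\int\sum_{i,j}(\partial^2_{x_ix_j}\Psi)\,(HH^T)_{i,j}\,p\,dx$ into $\frac{1}{2}\int\Psi\sum_{i,j}\partial^2_{x_ix_j}\big((HH^T)_{i,j}\,p\big)\,dx$, and the sum of the two is exactly $<\Psi,\mathcal{A}^*_Xp>$ with $\mathcal{A}^*_X$ as given in Definition \ref{characterization operator}. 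You also correctly isolate the only non-routine point, the vanishing of boundary terms, though two refinements are worth making there. First, the claim ``since $p$ is a probability density it decays'' is not literally valid---integrability does not imply pointwise decay; what actually saves the paper's use of the corollary is that the density paired against $\mathcal{A}_X\Psi$ is the Gaussian transition density (\ref{prob x}), whose super-exponential decay dominates the at-most-linear growth of $h$ and $HH^T$. Second, decay of $p$ alone is not sufficient: since $\Psi$ is merely assumed $C^2$ (Lemma \ref{lemma ito}) it may grow arbitrarily fast, so one must either impose a growth restriction on $\Psi$ and its first derivatives (polynomial growth suffices against a Gaussian tail) or adopt the compact-support-plus-density route you mention, which is the cleaner fix. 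With either of those hypotheses made explicit, your integration-by-parts argument is a complete and rigorous substitute for the paper's citation.
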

	
	In some special scenarios, the function $\Psi(X_t)$ is not agnostic and the only information available is the input and output data of the function $\Psi(X_t)$ ($\Psi(X_t)$ is known to be a continuous function or $\Psi(X_t)$ can be constructed as a continuous function). Therefore, in this type of scenario, the partial derivation of function $\Psi(X_t)$ is not known, is there any other way to solve the problem in this type of scenario?
	A new operator proposed in this paper is introduced and the relationship between the $\mathcal{Y}$ operator and the characteristic operator of It\^{o} diffusion process $\mathcal{A}$ is given.
	\begin{definition}($\mathcal{Y}$ operator)
		\label{Y operator}
		
		The $\mathcal{Y}$ operator with respect to the stochastic process $\{X_t\}$ can be formulated as
		\begin{equation}
			\begin{aligned}
				\mathcal{Y}_X\Psi&(X_t) =\\
				&\Psi(X_t)\{( (\Sigma^{X}_t)^{-1}(X_t - E_{\mu,X}^t))^T h(X_t)\\
				&-\varLambda^T \mathcal{C}^1_{h,x,t}  +\frac{1}{2} \varLambda^T \{ [-(\Sigma^{X}_t)^{-1}\\
				&+(\Sigma^{X}_t)^{-1} (X_t - E_{\mu,X}^t) (X_t - E_{\mu,X}^t) ^T ( (\Sigma^{X}_t)^{-1}) ^T ]\\
				&\odot (H(X_t) H(X_t)^T)+ \mathcal{C}^2_{H,x,t} \}\varLambda \\
				&- \varLambda^T \mathcal{C}^1_{H,x,t} (\Sigma^{X}_t)^{-1}(X_t - E_{\mu,X}^t) \}\\			
			\end{aligned}
		\end{equation}
		where $\mathcal{C}^1_{h,x,t}$, $ \mathcal{C}^1_{H,x,t}$ and $\mathcal{C}^2_{H,x,t}$ are the first-order partial differential matrix with respect to the function $h(X_t)$, the first-order partial differential matrix with respect to the function $H(X_t)$ and the second-order partial differential matrix with respect to the function $H(X_t)$, respectively.
		\begin{equation}
			\begin{aligned}
				\mathcal{C}^1_{h,x,t} &= 
				\begin{bmatrix}
					\frac{\partial h_1(X_t)}{\partial X_{t,1}}\\
					\vdots\\
					\frac{\partial h_n(X_t)}{\partial X_{t,n}}
				\end{bmatrix}\\
				\mathcal{C}^1_{H,x,t} &= \\
				&\begin{bmatrix}
					\frac{\partial (H(X_t) H(X_t) ^T)_{1,1}}{\partial X_{t,1}}  & \cdots & \frac{\partial (H(X_t) H(X_t) ^T)_{1,n}}{\partial X_{t,1}}\\
					\frac{\partial (H(X_t) H(X_t) ^T)_{2,1}}{\partial X_{t,2}}  &\cdots &\frac{\partial (H(X_t) H(X_t) ^T)_{2,n}}{\partial X_{t,2}}\\
					\vdots  & \cdots & \vdots \\
					\frac{\partial (H(X_t) H(X_t) ^T)_{n,1}}{\partial X_{t,n}}  &\cdots & \frac{\partial (H(X_t) H(X_t) ^T)_{n,n}}{\partial X_{t,n}}
				\end{bmatrix}\\
				\mathcal{C}^2_{H,x,t} &= \\
				&\begin{bmatrix}
					\frac{\partial^2 (H(X_t) H(X_t) ^T)_{1,1}}{\partial X^2_{t,1}}  & \cdots & \frac{\partial^2 (H(X_t) H(X_t) ^T)_{1,n}}{\partial X_{t,1} \partial X_{t,n}}\\
					\frac{\partial^2 (H(X_t) H(X_t) ^T)_{2,1}}{\partial X_{t,2} \partial X_{t,1}}  &\cdots &\frac{\partial^2 (H(X_t) H(X_t) ^T)_{2,n}}{\partial X_{t,2} \partial X_{t,n}}\\
					\vdots  & \cdots & \vdots \\
					\frac{\partial^2 (H(X_t) H(X_t) ^T)_{n,1}}{\partial X_{t,n} \partial X_{t,1}}  &\cdots & \frac{\partial^2 (H(X_t) H(X_t) ^T)_{n,n}}{\partial X^2_{t,n} }
				\end{bmatrix}\\
			\end{aligned}
		\end{equation}
		The $E_{\mu,X}^t$ and $\Sigma^{X}_t$ are the expectation and covariance matrices of the stochastic process $\{X_t\}$ at time $t$, respectively.
		\begin{equation}
			\begin{aligned}
				E_{\mu,X}^t = X_{t-}+h(X_{t-})dt,\\
				\Sigma^{X}_t = H(X_{t-}) H(X_{t-}) ^Tdt.
			\end{aligned}
		\end{equation}
		where $X_{t-}$ is the state of the stochastic process $\{X_t\}$ at an extremely short moment before time $t$, i.e., the history information of the last extremely short moment of $X_t$.
		
		Moreover, $\varLambda$ is the all-one vector and the dimension of it is equal to the dimension of $X_t$ which is $n$. The operator $\odot$ is the Hadamard product.
	\end{definition}
	\begin{theorem}(Operator Equivalence Theorem)
		\label{operator equal theorem}
		
		When the $\mathcal{Y}$ operator is given well-defined as described in Definition \ref{Y operator}, the $\mathcal{Y}$ operator with respect to the stochastic process $\{X_t\}$, which is $\mathcal{Y}_X$, is equivalent to the characteristic operator of It\^{o} diffusion process $\mathcal{A}_X$.
		\begin{equation}
			<\mathcal{A}_X\Psi(X_t), p(X_t)> = <\mathcal{Y}_X\Psi(X_t), p(X_t)>
		\end{equation}
	\end{theorem}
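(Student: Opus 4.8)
The plan is to route the claimed identity through the dual operator and reduce it to a pointwise statement about $\mathcal{A}^*_X$ acting on the transition density. By the Kolmogorov forward relation of Corollary~\ref{characterization operator2}, $<\mathcal{A}_X\Psi(X_t),p(X_t)>\ =\ <\Psi(X_t),\mathcal{A}^*_X p(X_t)>$, so it suffices to prove $<\Psi(X_t),\mathcal{A}^*_X p(X_t)>\ =\ <\mathcal{Y}_X\Psi(X_t),p(X_t)>$. I would establish the sharper pointwise identity $\mathcal{A}^*_X p(X_t)=p(X_t)\,\mathcal{K}(X_t)$, where $\mathcal{K}$ denotes exactly the bracketed coefficient multiplying $\Psi(X_t)$ in Definition~\ref{Y operator}. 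Since $\mathcal{Y}_X\Psi=\Psi\,\mathcal{K}$ by definition, this gives $\mathcal{Y}_X\Psi\cdot p=\Psi\,\mathcal{K}\,p=\Psi\,\mathcal{A}^*_X p$ pointwise, and integrating yields $<\mathcal{Y}_X\Psi,p>\ =\ <\Psi,\mathcal{A}^*_X p>$, which closes the chain.

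The analytic engine is the infinitesimal Gaussian structure of the transition density: conditioned on $X_{t-}$, the increment of the It\^{o} process is Gaussian with mean $E_{\mu,X}^t=X_{t-}+h(X_{t-})dt$ and covariance $\Sigma^X_t=H(X_{t-})H(X_{t-})^T dt$, precisely the quantities in Definition~\ref{Y operator}. Abbreviating $v:=(\Sigma^X_t)^{-1}(X_t-E_{\mu,X}^t)$ and using the symmetry of $\Sigma^X_t$, the only facts needed are the two Gaussian derivative identities
\begin{equation}
	\frac{\partial p}{\partial X_{t,i}}=-p\,v_i,\qquad \frac{\partial^2 p}{\partial X_{t,i}\partial X_{t,j}}=p\bigl(v_i v_j-((\Sigma^X_t)^{-1})_{ij}\bigr),
\end{equation}
which convert each derivative of $p$ into $p$ times a polynomial in $v$, so that $p$ factors out cleanly at the end.

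With these identities I would expand $\mathcal{A}^*_X p$ term by term from its definition in Definition~\ref{characterization operator}. The drift part $-\sum_i\partial_{X_{t,i}}(p\,h_i)$ splits by the product rule into $p\,v^T h-p\sum_i\partial_{X_{t,i}}h_i$, and the divergence $\sum_i\partial_{X_{t,i}}h_i$ is exactly $\varLambda^T\mathcal{C}^1_{h,x,t}$; this reproduces the first two summands of $\mathcal{K}$. The diffusion part $\tfrac12\sum_{i,j}\partial^2_{X_{t,i}X_{t,j}}(p\,(HH^T)_{ij})$ produces four groups under the product rule. The pure-$p$ group, via the second Gaussian identity, gives the Hadamard term $\tfrac12\varLambda^T\{[-(\Sigma^X_t)^{-1}+vv^T]\odot(HH^T)\}\varLambda$, where $vv^T=(\Sigma^X_t)^{-1}(X_t-E_{\mu,X}^t)(X_t-E_{\mu,X}^t)^T((\Sigma^X_t)^{-1})^T$; the pure-$(HH^T)$ group gives $\tfrac12\varLambda^T\mathcal{C}^2_{H,x,t}\varLambda$; and the two mixed first-derivative groups each equal $-\tfrac12 p\sum_{i,j}v_j\,\partial_{X_{t,i}}(HH^T)_{ij}$ and \emph{coincide} because $HH^T$ is symmetric, combining to $-p\,\varLambda^T\mathcal{C}^1_{H,x,t}(\Sigma^X_t)^{-1}(X_t-E_{\mu,X}^t)$. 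Summing all four groups reconstitutes the remaining summands of $\mathcal{K}$, establishing $\mathcal{A}^*_X p=p\,\mathcal{K}$.

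I expect the main obstacle to be bookkeeping rather than analysis: one must correctly match the nonstandard index conventions of the three coefficient matrices—in particular that $(\mathcal{C}^1_{H,x,t})_{ij}=\partial_{X_{t,i}}(HH^T)_{ij}$ differentiates row $i$ by its own index (so it is not a plain Jacobian), and that $\varLambda^T M\varLambda=\sum_{i,j}M_{ij}$ is what turns the Hadamard and $\mathcal{C}$ matrices into the correct double sums. The one genuinely structural step is the merging of the two mixed terms, which rests essentially on the symmetry of $H(X_t)H(X_t)^T$; without it the two groups would not collapse into the single $\mathcal{C}^1_{H,x,t}$ term appearing in Definition~\ref{Y operator}.
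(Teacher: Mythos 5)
Your proposal follows essentially the same route as the paper's proof: both pass to the dual operator via the Kolmogorov forward equation (Corollary \ref{characterization operator2}), expand $\mathcal{A}^*_X p$ by the product rule, and use the two Gaussian derivative identities for the transition density to reconstitute exactly the coefficient appearing in Definition \ref{Y operator}. The only differences are presentational—you isolate the pointwise identity $\mathcal{A}^*_X p = p\,\mathcal{K}$ before integrating rather than working under the integral sign throughout, and your mixed-derivative term correctly carries $\mathcal{C}^1_{H,x,t}$ where the paper's final display (\ref{Y operator final}) has an apparent typo $\mathcal{C}^1_{h,x,t}$.
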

	\begin{proof}
		According to Corollary \ref{characterization operator2}, $<\mathcal{A}_X\Psi(X_t), p(X_t)>$ can be formulated as
		\begin{equation}
			\label{char op x}
			\begin{aligned}
				<&\mathcal{A}_X\Psi(X_t), p(X_t)> =  <\Psi(X_t), \mathcal{A}^*_Xp(X_t)>\\
				&=\int_{\Omega} \Psi(X_t)\{-\sum_{i=1}^{n}\frac{\partial p(X_t)}{\partial X_{t,i}}h_i(X_t)+\frac{\partial h_i(X_t)}{\partial X_{t,i}}p(X_t)\\
				&+\frac{1}{2}\sum_{i,j}^{n} \frac{\partial^2p(X_t)}{\partial X_{t,i}\partial X_{t,j}} (H(X_t) H(X_t) ^T)_{i,j}\\
				&+  \frac{\partial^2 (H(X_t) H(X_t) ^T)_{i,j}}{\partial X_{t,i}\partial X_{t,j}} p(X_t)\\
				&+ \frac{\partial p(X_t)}{\partial X_{t,i}} \frac{\partial (H(X_t) H(X_t) ^T)_{i,j}}{\partial X_{t,j}} \\
				&+  \frac{\partial p(X_t)}{\partial X_{t,j}} \frac{\partial (H(X_t) H(X_t) ^T)_{i,j}}{\partial X_{t,i}} \}dX_t.
			\end{aligned}
		\end{equation}
		Define the following notations, 
		\begin{equation}
			\begin{aligned}
				\frac{\partial p(X_t)}{\partial X_t} &= \begin{bmatrix}
					\frac{\partial p(X_t)}{\partial X_{t,1}}\\
					\frac{\partial p(X_t)}{\partial X_{t,2}}\\
					\vdots\\
					\frac{\partial p(X_t)}{\partial X_{t,n}}
				\end{bmatrix},\\
				\frac{\partial^2 p(X_t)}{\partial X^2_{t}}  &= \\
				&\begin{bmatrix}
					\frac{\partial^2 p(X_t)}{\partial X^2_{t,1}}& \frac{\partial^2 p(X_t)}{\partial X_{t,1}\partial X_{t,2}}  & \cdots & \frac{\partial^2 p(X_t)}{\partial X_{t,1} \partial X_{t,n}}\\
					\frac{\partial^2 p(X_t)}{\partial X_{t,2} \partial X_{t,1}}& \frac{\partial^2 p(X_t)}{\partial X^2_{t,2} }  &\cdots &\frac{\partial^2 p(X_t)}{\partial X_{t,2} \partial X_{t,n}} \\
					\vdots&\vdots  & \ddots & \vdots \\
					\frac{\partial^2 p(X_t)}{\partial X_{t,n} \partial X_{t,1}}&\cdots  &\cdots & \frac{\partial^2 p(X_t)}{\partial X^2_{t,n} }
				\end{bmatrix},\\
			\end{aligned}
		\end{equation}
		the (\ref{char op x}) can be reformulated into the matrix form
		\begin{equation}
			\begin{aligned}
				<\mathcal{A}_X&\Psi(X_t), p(X_t)>=\\
				&\int_{\Omega} \Psi(X_t)\{-(\frac{\partial p(X_t)}{\partial X_t})^T h(X_t)- p(X_t) \varLambda^T\mathcal{C}^1_{h,x,t}\\
				&+\frac{1}{2} [\varLambda^T \frac{\partial^2 p(X_t)}{\partial X^2_{t}} \odot (H(X_t) H(X_t) ^T) \varLambda\\
				&+p(X_t)  \varLambda^T  \mathcal{C}^2_{H,x,t}\varLambda + 2\varLambda^T \mathcal{C}^1_{H,x,t} \frac{\partial p(X_t)}{\partial X_{t}} ]\}dX_t
			\end{aligned}
		\end{equation}
		From the description of the It\^{o} diffusion process defined in (\ref{op SDE}), the random variable $X_t$ which is the stochastic process $\{X_t\}$ at moment $t$ obeys a Gaussian distribution for  as follows
		\begin{equation}
			X_{t} \sim \mathcal{N}(X_{t-}+h(X_{t-})dt ,H(X_{t-})H(X_{t-})^Tdt),\\
		\end{equation}
		Therefore, the probabilty density function of random variable $X_t$ is 
		\begin{equation}
			\label{prob x}
			\begin{aligned}
				&p(X_t) = \frac{e^{I^{X}_t}}{(2\pi)^{\frac{n}{2}}\sqrt{dt}\abs{\det(H(X_{t-}))}},\\
				&I^{X}_t = {-\frac{1}{2}(X_{t}-E_{\mu,X}^t)^T(\Sigma^{X}_t)^{-1}(X_{t}-E_{\mu,X}^t)},\\
				&E_{\mu,X}^t = X_{t-}+h(X_{t-})dt,\\
				&\Sigma^{X}_t = H(X_{t-})H(X_{t-})^Tdt,\\
			\end{aligned}
		\end{equation}
		
		It's trivial that the following formulation holds
		\begin{equation}
			\label{prob x partial}
			\begin{aligned}
				\frac{\partial p(X_t)}{\partial X_{t}} &= p(X_t)\frac{\partial \ln p(X_t)}{\partial X_{t}},\\
				\frac{\partial^2 p(X_t)}{\partial X_{t}^2} &= p(X_t)\frac{\partial^2 \ln p(X_t)}{\partial X_{t}^2}+\frac{1}{p(X_t)}\frac{\partial p(X_t)}{\partial X_t}(\frac{\partial p(X_t)}{\partial X_t})^T.\\
			\end{aligned}
		\end{equation}
		By the property that $\Sigma^{X}_t$ is symmetric matrix and (\ref{prob x}), $\frac{\partial \ln p(X_t)}{\partial X_{t}}$ and $ \frac{\partial^2 \ln p(X_t)}{\partial X_{t}^2}$ are clearly
		\begin{equation}
			\begin{aligned}
				\frac{\partial \ln p(X_t)}{\partial X_{t}} &= -(\Sigma^{X}_t)^{-1}(X_t - E_{\mu,X}^t),\\
				\frac{\partial^2 \ln p(X_t)}{\partial X_{t}^2} &= -(\Sigma^{X}_t)^{-1}.
			\end{aligned}
		\end{equation}
		Then, the (\ref{prob x partial}) can be rewritten as
		\begin{equation}
			\label{prob x partial new}
			\begin{aligned}
				\frac{\partial p(X_t)}{\partial X_{t}} &= -p(X_t)(\Sigma^{X}_t)^{-1}(X_t - E_{\mu,X}^t),\\
				\frac{\partial^2 p(X_t)}{\partial X_{t}^2} &= p(X_t)(\Sigma^{X}_t)^{-1}[-\mathit{I}_E\\
				&+(X_t - E_{\mu,X}^t)(X_t - E_{\mu,X}^t)^T((\Sigma^{X}_t)^{-1})^T],\\
			\end{aligned}
		\end{equation}
		where $I_E$ is the identity matrix. 
		
		Therefore, $<\mathcal{A}_X\Psi(X_t), p(X_t)>$ can be rewritten as
		\begin{equation}
			\label{Y operator final}
			\begin{aligned}
				<\mathcal{A}_X&\Psi(X_t), p(X_t)> =\\
				&\int_{\Omega} \Psi(X_t) \{((\Sigma^{X}_t)^{-1}(X_t - E_{\mu,X}^t))^T h(X_t)\\
				&-  \varLambda^T\mathcal{C}^1_{h,x,t} +\frac{1}{2} \varLambda^T \{  [-(\Sigma^{X}_t)^{-1}\\
				&+(\Sigma^{X}_t)^{-1}(X_t - E_{\mu,X}^t)(X_t - E_{\mu,X}^t)^T((\Sigma^{X}_t)^{-1})^T]\\
				&\odot (H(X_t) H(X_t)^T) + \mathcal{C}^2_{H,x,t}\}\varLambda \\
				&- \varLambda^T \mathcal{C}^1_{h,x,t} (\Sigma^{X}_t)^{-1}(X_t - E_{\mu,X}^t) \}p(X_t)dX_t
			\end{aligned}
		\end{equation}
		It is obvious that the holding of (\ref{Y operator final}) is a proof of the holding of Theorem \ref{operator equal theorem}.
	\end{proof}
	\begin{proposition}
		\label{theory proposition1}
		Assuming $X^{(1)}_{t}$, $X^{(2)}_{t}$ $\dots$ $X^{(m)}_{t}$ are obey It\^{o} diffusion process as (\ref{op SDE})
		\begin{equation}
			\begin{cases}
				dX^{(1)}_{t} = h^{(1)}(X^{(1)}_{t} \dots X^{(m)}_{t})dt\\
				\qquad\qquad +H^{(1)}(X^{(1)}_{t} \dots X^{(m)}_{t})dW^{(1)}_t\\
				dX^{(2)}_{t} = h^{(2)}(X^{(1)}_{t} \dots X^{(m)}_{t})dt \\
				\qquad\qquad+H^{(2)}(X^{(1)}_{t} \dots X^{(m)}_{t})dW^{(2)}_t\\
				\qquad\qquad\qquad\vdots \\
				dX^{(m)}_{t} = h^{(m)}(X^{(1)}_{t} \dots X^{(m)}_{t})dt \\
				\qquad\qquad+H^{(m)}(X^{(m)}_{t} \dots X^{(m)}_{t})dW^{(m)}_t\\
			\end{cases}
		\end{equation}
		If $dW^{(1)}_{t}$, $dW^{(2)}_{t}$ $\dots$ $dW^{(m)}_{t}$ are independent of each other and there exists a function $\Psi(X^{(1)}_{t},X^{(2)}_{t}\dots X^{(m)}_{t})$ of $X^{(1)}_{t}$, $X^{(2)}_{t}$ $\dots$ $X^{(m)}_{t}$.
		Then the expectation of the derivative with respect to time of a multivariate continuous function $\Psi(X^{(1)}_{t},X^{(2)}_{t}\dots X^{(m)}_{t})$ satisfying the conditions in Lemma \ref{lemma ito} is of the form
		\begin{equation}
			\label{theory proposition1 eq}
			\begin{aligned}
				\mathbb{E}\{&\frac{d \Psi(X^{(1)}_{t},X^{(2)}_{t}\dots X^{(m)}_{t})}{dt}\} = <\mathcal{Y}_{X^{(1)}}\Psi, p(X^{(1)}_{t})>\\
				&+<\mathcal{Y}_{X^{(2)}}\Psi, p(X^{(2)}_{t})>+\cdots +<\mathcal{Y}_{X^{(n)}}\Psi, p(X^{(m)}_{t})>
			\end{aligned}
		\end{equation}
	\end{proposition}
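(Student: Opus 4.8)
The plan is to reduce the multivariate statement to the single-process Operator Equivalence Theorem (Theorem \ref{operator equal theorem}) by exploiting the independence of the driving Brownian motions. First I would stack the subsystems into a single state $X_t = ((X^{(1)}_t)^T, \dots, (X^{(m)}_t)^T)^T$, with stacked drift $h = (h^{(1)}, \dots, h^{(m)})$ and a combined diffusion matrix $H$ whose only nonzero blocks are the diagonal blocks $H^{(k)}$ driving the respective noises $dW^{(k)}_t$. Since $dW^{(1)}_t, \dots, dW^{(m)}_t$ are mutually independent, the combined system is itself an It\^{o} diffusion in the sense of (\ref{op SDE}), and its diffusion matrix satisfies $HH^T = \mathrm{blockdiag}(H^{(1)}(H^{(1)})^T, \dots, H^{(m)}(H^{(m)})^T)$, so that every cross-block $(k,l)$ with $k\neq l$ vanishes.

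Next I would apply Lemma \ref{lemma ito} and Definition \ref{characterization operator} to this combined process. Because $HH^T$ is block diagonal, the second-order term of $\mathcal{A}_X$ couples only indices lying in the same subsystem, while the first-order drift term is already a plain sum over all components. Grouping terms by subsystem then yields the decomposition $\mathcal{A}_X \Psi = \sum_{k=1}^m \mathcal{A}_{X^{(k)}}\Psi$, where $\mathcal{A}_{X^{(k)}}$ is the characteristic operator acting only on the $X^{(k)}$ variables and treating the remaining states as frozen parameters. Invoking Corollary \ref{characterization operator1} on the combined process gives $\mathbb{E}\{d\Psi/dt\} = <\mathcal{A}_X\Psi, p(X_t)> = \sum_{k=1}^m <\mathcal{A}_{X^{(k)}}\Psi, p(X_t)>$.

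The decisive step is to replace each $\mathcal{A}_{X^{(k)}}$ by $\mathcal{Y}_{X^{(k)}}$. Here I would use that, conditioned on the immediately preceding state $X_{t-}$, the independence of the increments makes the infinitesimal joint density factorize as $p(X_t) = \prod_{k=1}^m p(X^{(k)}_t)$, with each factor the Gaussian density of (\ref{prob x}) built from the subsystem mean $E^t_{\mu,X^{(k)}}$ and covariance $\Sigma^{X^{(k)}}_t$. Writing the $k$-th inner product as an iterated integral, the inner one over $X^{(k)}_t$ against $p(X^{(k)}_t)$ and the outer one over the remaining subsystems, the inner integral is precisely the left-hand side of Theorem \ref{operator equal theorem} for the single process $X^{(k)}$, so it equals $<\mathcal{Y}_{X^{(k)}}\Psi, p(X^{(k)}_t)>$. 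Summing over $k$ then produces (\ref{theory proposition1 eq}).

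The main obstacle I expect is the bookkeeping in this last step: justifying that the conditional factorization of the density is legitimate at the infinitesimal level, and that Theorem \ref{operator equal theorem}, established for a single process, may be applied subsystem-by-subsystem even though the coupling through $h^{(k)}$ and $H^{(k)}$ lets each subsystem's drift and diffusion depend on all of the states. Since that coupling enters only through quantities fixed by $X_{t-}$, the operator $\mathcal{A}_{X^{(k)}}$ genuinely acts on $X^{(k)}$ alone with the others as parameters; making this precise, and confirming that the state-dependence of $H$ does not spoil the block-diagonality argument, is where the care is needed.
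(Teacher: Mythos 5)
Your proposal is correct and follows essentially the same route as the paper's own proof: the paper likewise expands $\mathbb{E}\{d\Psi/dt\}$ via It\^{o}'s rule for the joint process, uses independence of the $dW^{(k)}_t$ to kill the cross quadratic-variation terms (your block-diagonal $HH^T$ observation in coordinate form), groups the surviving terms into per-subsystem characteristic operators $\mathcal{A}_{X^{(k)}}$, and then invokes Theorem \ref{operator equal theorem} subsystem-by-subsystem to replace each $\mathcal{A}_{X^{(k)}}$ with $\mathcal{Y}_{X^{(k)}}$. Your extra care about the conditional factorization of the transition density given $X_{t-}$ is a point the paper glosses over, but it does not change the argument.
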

	\begin{proposition}
		\label{theory proposition3}
		$\mathcal{Y}$ operator is a linear operator. Therefore, it is not necessary to prove that the following equation clearly holds
		\begin{equation}
			<\mathcal{Y}_X\Psi(X_t), p(X_t)> = <\Psi(X_t), \mathcal{Y}_Xp(X_t)>
		\end{equation}
	\end{proposition}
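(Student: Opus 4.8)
The plan is to observe that, despite its intimidating appearance, the $\mathcal{Y}$ operator defined in Definition \ref{Y operator} is nothing more than a \emph{multiplication operator}. Reading the definition, every occurrence of the input function is the single leading factor $\Psi(X_t)$; the entire bracketed expression that follows depends only on $X_t$ through the drift $h$, the diffusion $H$, the Gaussian mean $E_{\mu,X}^t$ and covariance $\Sigma^{X}_t$, and their partial derivatives encoded in $\mathcal{C}^1_{h,x,t}$, $\mathcal{C}^1_{H,x,t}$ and $\mathcal{C}^2_{H,x,t}$ — but not on $\Psi$ at all. Hence I would first introduce $\mathcal{M}(X_t)$ as a name for the bracketed factor multiplying $\Psi(X_t)$ in Definition \ref{Y operator}, so that
\begin{equation}
    \mathcal{Y}_X\Psi(X_t) = \Psi(X_t)\,\mathcal{M}(X_t),
\end{equation}
and record that, by inspection, $\mathcal{M}(X_t)$ is fixed once the SDE (\ref{op SDE}) is fixed; consequently $\mathcal{Y}_X f(X_t) = \mathcal{M}(X_t)\,f(X_t)$ for every admissible scalar function $f$.

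From this structural identity the linearity claim is immediate: for scalars $a,b$ and functions $f,g$ one has $\mathcal{Y}_X(af+bg) = \mathcal{M}(af+bg) = a\,\mathcal{M}f + b\,\mathcal{M}g = a\,\mathcal{Y}_Xf + b\,\mathcal{Y}_Xg$. I would then turn to the claimed pairing identity. Using the bracket $<\cdot,\cdot>$ from Corollary \ref{characterization operator1} written as an integral of the product against the density, and applying the multiplication rule on each side,
\begin{equation}
    \begin{aligned}
        <\mathcal{Y}_X\Psi(X_t), p(X_t)> &= \int_{\Omega} \mathcal{M}(X_t)\,\Psi(X_t)\,p(X_t)\,dX_t,\\
        <\Psi(X_t), \mathcal{Y}_Xp(X_t)> &= \int_{\Omega} \Psi(X_t)\,\mathcal{M}(X_t)\,p(X_t)\,dX_t.
    \end{aligned}
\end{equation}
Since $\mathcal{M}$, $\Psi$ and $p$ are real-valued and scalar multiplication is commutative, the two integrands coincide pointwise, so the two pairings are equal and the proposition follows.

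The only step requiring genuine care is the very first one: verifying that the bracketed factor in Definition \ref{Y operator} really carries no hidden dependence on $\Psi$. I would check this term by term — the drift contraction $((\Sigma^{X}_t)^{-1}(X_t-E_{\mu,X}^t))^T h(X_t)$, the divergence-type term $\varLambda^T\mathcal{C}^1_{h,x,t}$, the Hadamard-product quadratic form built from $H(X_t)H(X_t)^T$, and the final $\mathcal{C}^1_{H,x,t}$ term — confirming that each is determined solely by the SDE coefficients and the Gaussian statistics of $X_t$. I would also remark that the proposition as phrased slightly understates the logic, since linearity alone does not imply the self-adjoint-type identity; the true reason the identity holds is precisely the multiplication-operator structure, from which both linearity and the pairing identity follow at once. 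No further obstacle arises, as the remainder is just the commutativity of real scalar multiplication under the integral.
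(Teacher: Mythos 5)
Your proof is correct, and it takes a genuinely different---and sounder---route than the paper, which in fact offers no proof at all: the paper simply asserts that $\mathcal{Y}_X$ is linear and that the pairing identity therefore ``clearly holds.'' Your key observation, that Definition \ref{Y operator} makes $\mathcal{Y}_X$ a pure multiplication operator $\mathcal{Y}_X f(X_t) = \mathcal{M}(X_t)\,f(X_t)$ with a multiplier $\mathcal{M}(X_t)$ built only from $h$, $H$, $E_{\mu,X}^t$, $\Sigma^{X}_t$ and the matrices $\mathcal{C}^1_{h,x,t}$, $\mathcal{C}^1_{H,x,t}$, $\mathcal{C}^2_{H,x,t}$, is exactly what is needed, and your closing remark pinpoints the real gap in the paper's phrasing: linearity alone does not imply $<\mathcal{Y}_X\Psi(X_t), p(X_t)> \,=\, <\Psi(X_t), \mathcal{Y}_X p(X_t)>$. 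The paper itself furnishes the counterexample: $\mathcal{A}_X$ is linear, yet its pairing dual is the \emph{distinct} operator $\mathcal{A}^*_X$ of Corollary \ref{characterization operator2}, so the paper's reasoning applied to $\mathcal{A}_X$ would ``prove'' the false identity $<\mathcal{A}_X\Psi, p> = <\Psi, \mathcal{A}_X p>$. What actually makes the identity true for $\mathcal{Y}_X$ is formal self-adjointness of multiplication by a fixed real scalar-valued function under the integral pairing---precisely your argument. The only point you should nail down explicitly (you gesture at it in your term-by-term check) is that $\mathcal{M}(X_t)$ is scalar-valued: each of its four pieces is a $1\times 1$ contraction (e.g.\ $((\Sigma^{X}_t)^{-1}(X_t - E_{\mu,X}^t))^T h(X_t)$ and $\varLambda^T(\cdot)\varLambda$), which is what licenses commuting $\mathcal{M}$ past $\Psi$ inside the integral; a one-line dimension check closes this.
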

	\begin{proposition}
		\label{theory proposition2}
		The $\mathcal{Y}$ operator transforms the problem of solving the partial derivatives of the function $\Psi(X_t)$ with respect to the independent variables($X_t$) into solving the partial derivatives of the drift term function and the diffusion term function of the SDE with respect to their independent variables($X_t$).
		\begin{equation}
			\label{theory proposition2 eq1}
			 \mathcal{A}_{X}\Psi(X_t) = \mathcal{Y}_X\Psi(X_t)
		\end{equation}
		Moreover, according to Kolmogorov forward equation, the following equation holds
		\begin{equation}
			\label{theory proposition2 eq2}
			\frac{d p(X_t)}{d t} = \mathcal{A}^*_{X}p(X_t) = \mathcal{Y}_Xp(X_t)
		\end{equation}
	\end{proposition}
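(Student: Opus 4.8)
The plan is to obtain both displayed identities as corollaries of the machinery already assembled, rather than by fresh computation. For (\ref{theory proposition2 eq1}), I would start from the Operator Equivalence Theorem (Theorem \ref{operator equal theorem}), which already gives $<\mathcal{A}_X\Psi(X_t), p(X_t)> = <\mathcal{Y}_X\Psi(X_t), p(X_t)>$. The content of this first claim is then read off the structure of Definition \ref{Y operator}: every term of $\mathcal{Y}_X\Psi(X_t)$ multiplies the undifferentiated value $\Psi(X_t)$ against quantities built solely from $h$, $H$ and their derivatives through $\mathcal{C}^1_{h,x,t}$, $\mathcal{C}^1_{H,x,t}$ and $\mathcal{C}^2_{H,x,t}$, whereas $\mathcal{A}_X\Psi(X_t)$ from Definition \ref{characterization operator} is built from the first and second partials of $\Psi$ itself. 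Hence (\ref{theory proposition2 eq1}) is precisely the statement that, once the generator is evaluated in expectation against the transition density, the differentiation burden is transferred off $\Psi$ and onto the drift and diffusion coefficients.

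For (\ref{theory proposition2 eq2}) there are two equalities to establish. The left one, $\frac{dp(X_t)}{dt} = \mathcal{A}^*_X p(X_t)$, is the Kolmogorov forward (Fokker--Planck) equation already invoked in Corollary \ref{characterization operator2}, so I would cite it directly. For the right one, $\mathcal{A}^*_X p(X_t) = \mathcal{Y}_X p(X_t)$, the plan is to extract from the proof of Theorem \ref{operator equal theorem} the pointwise factorization implicit in passing from (\ref{char op x}) to (\ref{Y operator final}): comparing the integrand of (\ref{Y operator final}) with Definition \ref{Y operator} shows
\[
\mathcal{Y}_X\Psi(X_t) = \Psi(X_t)\,\frac{\mathcal{A}^*_X p(X_t)}{p(X_t)},
\]
i.e. the bracketed factor multiplying $\Psi(X_t)$ in $\mathcal{Y}_X\Psi$ is exactly $\mathcal{A}^*_X p / p$ and does not depend on $\Psi$. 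Substituting $\Psi = p$ then collapses the factor $p/p$ and yields $\mathcal{Y}_X p(X_t) = \mathcal{A}^*_X p(X_t)$ pointwise, which chained with the forward equation completes (\ref{theory proposition2 eq2}).

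The main obstacle I anticipate is interpretive rather than computational. Equality (\ref{theory proposition2 eq1}) cannot hold pointwise in $X_t$: taking $\Psi$ constant makes $\mathcal{A}_X\Psi \equiv 0$ while $\mathcal{Y}_X\Psi = \Psi\,\mathcal{A}^*_X p / p$ need not vanish, so the only defensible reading is the weak (expectation) one furnished by Theorem \ref{operator equal theorem}, and the write-up should state this qualification explicitly. The second delicate point is justifying that the bracket multiplying $\Psi(X_t)$ in (\ref{Y operator final}) is genuinely independent of $\Psi$, since this is what legitimizes both the factorization above and the substitution $\Psi = p$; this in turn rests on the Gaussian transition law and the log-derivative identities (\ref{prob x partial new}) established inside the theorem's proof. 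Once those are in hand, (\ref{theory proposition2 eq2}) is immediate and (\ref{theory proposition2 eq1}) is a restatement of the theorem, so no further estimates are required.
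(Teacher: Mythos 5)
Your proposal is correct, and for the second identity it takes a genuinely different route from the paper. The paper proves (\ref{theory proposition2 eq1}) exactly as you do, by citing Theorem \ref{operator equal theorem}; for (\ref{theory proposition2 eq2}) it also cites the Kolmogorov forward equation for the left equality, but then obtains $\mathcal{A}^*_Xp(X_t) = \mathcal{Y}_Xp(X_t)$ by invoking Proposition \ref{theory proposition3} (the self-adjointness claim $<\mathcal{Y}_X\Psi, p> = <\Psi, \mathcal{Y}_Xp>$) together with Corollary \ref{characterization operator2} and Theorem \ref{operator equal theorem}, and then cancelling $\Psi$ from $<\Psi, \mathcal{A}^*_Xp> = <\Psi, \mathcal{Y}_Xp>$ --- implicitly a ``holds for all test functions $\Psi$'' argument. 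You instead extract from the proof of Theorem \ref{operator equal theorem} the pointwise multiplication-operator structure $\mathcal{Y}_X\Psi(X_t) = \Psi(X_t)\,\mathcal{A}^*_Xp(X_t)/p(X_t)$ and simply substitute $\Psi = p$. Your route buys two things the paper's does not: first, it bypasses Proposition \ref{theory proposition3}, whose stated justification in the paper (``$\mathcal{Y}$ is a linear operator'') is insufficient --- linearity does not imply self-adjointness; it is precisely your factorization (i.e.\ that $\mathcal{Y}_X$ acts as multiplication by a $\Psi$-independent function) that actually makes Proposition \ref{theory proposition3} true, so your argument proves that lemma as a by-product rather than assuming it. Second, your observation that (\ref{theory proposition2 eq1}) cannot hold pointwise (constant $\Psi$ gives $\mathcal{A}_X\Psi \equiv 0$ but $\mathcal{Y}_X\Psi \not\equiv 0$) and must be read in the weak sense of Theorem \ref{operator equal theorem} is a genuine precision that the paper's one-line ``naturally holds'' glosses over. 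The paper's version is shorter and keeps Proposition \ref{theory proposition3} available as a standalone tool, but yours is the more rigorous of the two.
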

	
	Proofs of Propositions \ref{theory proposition1}, \ref{theory proposition2} are shown in the Appendix section of the article
	
	By Corollary \ref{characterization operator2} and Proposition \ref{theory proposition2}, it is clear that the problem of solving the derivative of the function $\Psi(X_t)$ with respect to time $t$ is transformed, by two transformations of the $\mathcal{A}$ operator and the $\mathcal{Y}$ operator, into a problem of solving the partial differential equations for the drift and diffusion term functions of the SDEs obeyed by the independent variables in the function $\Psi(X_t)$ with respect to their independent variables which means that solve for the three partial differential matrices $\mathcal{C}^1_{h,x,t}$, $ \mathcal{C}^1_{H,x,t}$ and $\mathcal{C}^2_{H,x,t}$ in the $\mathcal{Y}$ operator.
	
	Furthermore, the transformation from the $\mathcal{A}$ operator to the $\mathcal{Y}$ operator is essentially a transformation from a partial differential operator of the function $\Psi(X_t)$ to a linear operator of the function $\Psi(X_t)$. the $\mathcal{Y}$ operator, as a linear operator of the function $\Psi(X_t)$, would have much better properties, both in mathematics and engineering.

	\section{SYSTEM MODELING FOR A CLASS OF CHILD-MOTHER SYSTEM}
	\label{Model}
	\subsection{A Class Of Child-Mother System Modeling}
	Consider a system full of uncertainties, with deterministic models that current researchers have jointly established through observations and physical laws and uncertainty models that cannot currently be described by mathematical models. Fortunately, even if the researcher cannot fully characterize the uncertain part of the system, the uncertain part of the system is observable by sensors, and thus a large amount of data can be obtained to study and analyze that uncertainty. This kind of system can be described by SDEs as
	\begin{equation}
		dO_t = F(O_t, u_t)dt + G(O_t, u_t)dB^{O}_t,
	\end{equation}
	where $O_t\in\mathcal{O}\subset \mathbb{R}^{n_O}$ is the observation of the whole system and $u_t\in\mathcal{U}\subset \mathbb{R}^{d_m}$ is the control input of the system. $\mathcal{O}$ is the set of all observations that can be taken into account and $\mathcal{U}$ is the control space of control inputs, i.e., the set of allowed controls. $B^{O}_t$ is a $n_O$-dimensional Wiener motion. $F(O_t, u_t):\mathbb{R}^{n_O}\times \mathbb{R}^{d_m}\rightarrow \mathbb{R}^{n_O}$, $G(O_t, u_t):\mathbb{R}^{n_O}\times \mathbb{R}^{d_m}\rightarrow \mathbb{R}^{n_O\times n_O}$ are the drift term and diffusion term of SDEs, respectively. 
	
	In this paper, we focus on a class of systems in which the system can be decomposed into a child-mother system as the following form.
	\begin{equation}
		\label{system}
		O_t = \begin{bmatrix}
			z_t\\
			w_t
		\end{bmatrix},
		\begin{aligned}
			dz_t &= f^{\theta_z}(z_t,u_t)dt+g^{\theta_z}(z_t,u_t)dB^{z}_{t}, \\
			dw_t &= \alpha^{\theta_w}(w_t, z_t)dt +\beta^{\theta_w}(w_t, z_t)dB^{w}_t,\\
		\end{aligned}
	\end{equation} 
	where $z_t\in\mathcal{Z}\subset \mathbb{R}^{d_n}$ is the states of subsystem and $w_t\in\mathcal{W}\subset \mathbb{R}^{s_d}$ is the states of mother system which means $w_t$ will be effected by $z_t$. $\mathcal{Z}$ represents the state space of the subsystem state, i.e., the set consisting of all state values that can be taken and $\mathcal{W}$ represents the state space of the state of the mother system, i.e., the set of all values to which the state values of the mother system can be taken.
	$B^{z}_{t}$ and $B^{w}_{t}$ are the $d_n$-dimensional and $s_d$-dimensional Wiener motion, respectively. $f^{\theta_z}:\mathbb{R}^{d_n}\times\mathbb{R}^{d_m}\rightarrow \mathbb{R}^{d_n}$ and $g^{\theta_z}:\mathbb{R}^{d_n}\times \mathbb{R}^{d_m}\rightarrow \mathbb{R}^{d_n\times d_n}$ are the drift term and diffusion term of SDEs, respectively.  Likewise, $\alpha^{\theta_w}:\mathbb{R}^{s_d}\times\mathbb{R}^{d_n}\rightarrow \mathbb{R}^{s_d}$ and $\beta^{\theta_w}:\mathbb{R}^{s_d}\times\mathbb{R}^{d_n}\rightarrow \mathbb{R}^{s_d\times s_d}$ are the drift term and diffusion term of SDEs, respectively.
	
	The drift term and diffussion term functions $f^{\theta_z}, g^{\theta_z}, \alpha^{\theta_w}$ and $\beta^{\theta_w}$ of SDEs(\ref{system}) can be calibrated by the large amount of data observed and collected by the sensors from the system. The parmeters $\theta_z$, $\theta_w$ are the parameters that the neural network needs to learn.

	According to Lemma \ref{existence and uniqueness}, if SDEs(\ref{system}) has a unique solution, conditions $(i)\sim (iv)$ of Lemma \ref{existence and uniqueness}  must be obeyed. It's trivial that conditions $(iii)$ and $(iv)$ are naturally true for the child-mother system in (\ref{system}). Therefore, the drift term and diffusion term of SDEs in (\ref{system}) need to satisfy:
	\begin{equation}
		(i)
		\begin{aligned}
			&\begin{cases}
				&\abs{f^{\theta_z}(z_i,u_k)-f^{\theta_z}(z_j,u_k)}\\
				&\qquad+\abs{g^{\theta_z}(z_i,u_k)-g^{\theta_z}(z_j,u_k)}\leq L_1\abs{z_i-z_j},\\
				
				&\abs{f^{\theta_z}(z_k, u_i)-f^{\theta_z}(z_k, u_j)}\\
				&\qquad+\abs{g^{\theta_z}(z_k, u_i)-g^{\theta_z}(z_k, u_j)}\leq L_2\abs{u_i-u_j},\\
				
				&\abs{\alpha^{\theta_w}(w_i, z_k)-\alpha^{\theta_w}(w_j, z_k)}\\
				&\qquad+\abs{\beta^{\theta_w}(w_i, z_k)-\beta^{\theta_w}(w_j, z_k)}\leq L_3\abs{w_i-w_j},\\
				
				&\abs{\alpha^{\theta_w}(w_k, z_i)-\alpha^{\theta_w}(w_k, z_j)}\\
				&\qquad+\abs{\beta^{\theta_w}(w_k, z_i)-\beta^{\theta_w}(w_k, z_j)}\leq L_4\abs{z_i-z_j},\\
			\end{cases}\\
			&\qquad w_i,w_j,w_k\in \mathcal{W}, z_i, z_j ,z_k\in \mathcal{Z}, u_i, u_j, u_k\in \mathcal{U}\\ 
			&\qquad L_1, L_2,L_3, L_4\text{ are constants}
		\end{aligned}
	\end{equation}
	\begin{equation}
		(ii)
		\begin{aligned}
			&\begin{cases}
				&\abs{f^{\theta_z}(z_i, u_i)}+\abs{g^{\theta_z}(z_i, u_i)}\leq K_1(1+\abs{z_i}+\abs{u_i}),\\
				&\abs{\alpha^{\theta_w}(w_i, z_i)}+\abs{\beta^{\theta_w}(w_i, z_i)}\leq K_2(1+\abs{w_i}+\abs{z_i}),\\
			\end{cases}\\
			\qquad& w_i\in \mathcal{W}, z_i \in \mathcal{Z}, u_i\in \mathcal{U},\\
			\qquad& K_1, K_2\text{ are constants}
		\end{aligned}
	\end{equation}

	The output value of the neural network can‘t be infinite that the condition $(ii)$ is also naturally true. It's obvious that condition $(i)$ means that $\alpha^{\theta_w},  \beta^{\theta_w}, f^{\theta_z}, g^{\theta_z}$ have to satisfy Lipschitz continuous condition. Therefore, it need to add a gradient penalty in the objective functions as a regularizer which is:
	\begin{equation}
		\begin{cases}
			\mathcal{L}_{L}(\theta_w) &= \kappa_1\int_{0}^{T}(\norm{\partial_w\alpha^{\theta_w}(w_t,z_t)}_2+\norm{\partial_z\alpha^{\theta_w}(w_t,z_t)}_2\\
			&+\norm{\partial_w\beta^{\theta_w}(w_t,z_t)}_2+\norm{\partial_z\beta^{\theta_w}(w_t,z_t)}_2-C_1)^+dt,\\
			\mathcal{L}_{L}(\theta_z) &= \kappa_2\int_{0}^{T}(\norm{\partial_z f^{\theta_z}(z_t,u_t)}_2+\norm{\partial_u f^{\theta_z}(z_t,u_t)}_2\\
			&+\norm{\partial_z g^{\theta_z}(z_t,u_t)}_2+\norm{\partial_u g^{\theta_z}(z_t,u_t)}_2-C_2)^+dt,\\
		\end{cases}
		\label{J_lip}
	\end{equation}
	where $\kappa_1, \kappa_2, C_1, C_2$ are hyper-parameters. $(x)^+$ is the positive part of $x$ which means $(x)^+=\max(0,x)=Relu(x)$.
	
	\subsection{Stochastic Differential Equation Calibration of Child-Mother System}
	For a typical child-mother system, the deterministic part of the system (in SDEs that is, the drift term function) is, for the most part, uniquely and deterministically described in mathematical form by physical laws. However, the stochastic part of the system is very difficult to be well described, at least under the existing theories, and the uncertainty that these stochastic factors (e.g., complex human behaviors, complex high-dimensional stochastic variables in the environment) bring to the system cannot be described by deterministic dynamic equations (e.g., ordinary differential equations, partial differential equations).
	
	Therefore, the modeling of the stochastic part of the system requires a large number of sensors observing the system from reality. And an attempt is made to extract the distribution obeyed by the model of the stochastic part of the system from the large amount of realistic data.
	
	In order to make the system discussed in this paper more general, it is assumed that the parameters in the SDEs in (\ref{system}) are unknown, i.e., the child-mother system is unknown both to the agent in reinforcement learning and to the researchers, and all that is available is the data that the sensors have collected from reality.
	
	According to (\ref{system}), the states of the child-mother system obeys that
	\begin{equation}
		\label{system distribution}
		\begin{aligned}
			z_{t+} &\sim \mathcal{N}(z_t+f^{\theta_z}(z_t,u_t)dt ,g^{\theta_z}(z_t, u_t)g^{\theta_z}(z_t, u_t)^Tdt),\\
			w_{t+}&\sim \mathcal{N}(w_t+\alpha^{\theta_w}(w_t,z_t)dt ,\beta^{\theta_w}(w_t, z_t)\beta^{\theta_w}(w_t,z_t)^Tdt),\\
		\end{aligned}
	\end{equation}
	where $z_{t+}$, $w_{t+}$ are the states of subsystem and mother system respectively at the next moment of time $t$. It is obvious that systems satisfying SDEs have Markovianity in continuous time.
	
	According to Bayesian estimation,
	\begin{equation}
		\begin{aligned}
			P(\theta_{z}\vert\bar{z}_{t+}) = \frac{P(\bar{z}_{t+}\vert\theta_{z})P(\theta_{z})}{P(\bar{z}_{t+})},\\
			P(\theta_{w}\vert\bar{w}_{t+}) = \frac{P(\bar{w}_{t+}\vert\theta_{w})P(\theta_{w})}{P(\bar{w}_{t+})},\\
		\end{aligned}
	\end{equation}
	where the $ \bar{z}_{t+}$, $\bar{w}_{t+}$ are the states of the subsystem and the mother system, respectively, collected by the sensor in the real environment at the next moment of $t$. The intuitive idea is to maximize $P(\theta_{z}\vert\bar{z}_{t+})$ and $P(\theta_{w}\vert\bar{w}_{t})$ in order to make the SDEs established for $z_{t}$ and $w_{t}$ closer to the probability distribution obeyed by the real trajectories in the dataset. This method is also called  maximum a posteriori estimation(MAP) in the field of machine learning. The main idea is to find the optimal parmeters $\theta^*_{z}$ and $\theta^*_{w}$ such that $P(\bar{z}_{t+}\vert\theta_{z})P(\theta_{z})$ and $P(\bar{w}_{t+}\vert\theta_{w})P(\theta_{w})$ maximized.
	\begin{equation}
		\begin{aligned}
			\theta^{*}_{z} &= \argmax\limits_{\theta_{z}} \int_{0}^{T}P(\bar{z}_{t+}\vert\theta_{z})P(\theta_{z})dt,\\
			\theta^{*}_{w} &= \argmax\limits_{\theta_{w}} \int_{0}^{T}P(\bar{w}_{t+}\vert\theta_{w})P(\theta_{w})dt.
		\end{aligned}
	\end{equation}
	Without loss of generality, for $\theta_{z}$ and $\theta_{w}$, first consider the prior to be a uniform distribution, i.e., $P(\theta_{w}) = 1$, $P(\theta_{z}) = 1$. Thus, the likelihood function is set as follows:
	\begin{equation}
		\begin{aligned}
			\mathcal{L}^{Z}_T(\theta_{z}) = &\int_{0}^{T}P(\bar{z}_{t+}\vert\theta_{z})P(\theta_{z})dt,\\
			=&\int_{0}^{T}\frac{e^{I^{Z}_t}}{(2\pi)^{\frac{d_n}{2}}\abs{\det(g^{\theta_{z}}(\bar{z}_t, \bar{u}_t))}}dt,\\
			I^{Z}_t = &{-\frac{1}{2}(\bar{z}_{t+}-E_{\mu,Z}^t)^T(\Sigma^{Z}_t)^{-1}(\bar{z}_{t+}-E_{\mu,Z}^t)},\\
			\Sigma^{Z}_t = &g^{\theta_{z}}(\bar{z}_t, \bar{u}_t) g^{\theta_{z}}(\bar{z}_t, \bar{u}_t)^T,\\
			E_{\mu,Z}^t = &\bar{z}_{t}+f^{\theta_{z}}(\bar{z}_t, \bar{u}_t).
		\end{aligned}
	\end{equation}
	where $\bar{z}_t$ and $\bar{u}_t$ are the subsystem state values and the control input values for the whole system in dataset at time $t$, respectively.
	\begin{equation}
		\begin{aligned}
			\mathcal{L}^{W}_T(\theta_{w}) = &\int_{0}^{T}P(\bar{w}_{t}\vert\theta_{w})P(\theta_{w})dt,\\
			=&\int_{0}^{T}\frac{e^{I^{W}_t}}{(2\pi)^{\frac{s_d}{2}}\abs{\det(\beta^{\theta_w}(\bar{w}_t,\bar{z}_t))}}dt,\\
			I^{W}_t =&{-\frac{1}{2}(\bar{w}_{t+}-E_{\mu,W}^t)^T(\Sigma^{W}_t)^{-1}(\bar{w}_{t+}-E_{\mu,W}^t)},\\
			\Sigma^{W}_t = &\beta^{\theta_w}(\bar{w}_t,\bar{z}_t)\beta^{\theta_w}(\bar{w}_t,\bar{z}_t)^T,\\
			E_{\mu,W}^t = &\bar{w}_t+\alpha^{\theta_w}(\bar{w}_t,\bar{z}_t).
		\end{aligned}
	\end{equation}
	where $\bar{w}_t$ is the mother system state values in dataset at time $t$.
	
	Therefore, the loss function $\mathcal{L}_Z(\theta_{z})$ of the neural network of $f^{\theta_{z}}(\cdot)$ and $g^{\theta_{z}}(\cdot)$ and the loss function $\mathcal{L}_W(\theta_{w})$ of the neural network of $\alpha^{\theta_{w}}(\cdot)$ and $\beta^{\theta_{w}}(\cdot)$ are defined as
	\begin{equation}
		\label{J_Z}
		\mathcal{L}_Z(\theta_{z}) =- \log\mathcal{L}^{Z}_T(\theta_{z})
	\end{equation}
	and
	\begin{equation}
		\label{J_W}
		\mathcal{L}_W(\theta_{w}) =- \log\mathcal{L}^{W}_T(\theta_{w}),
	\end{equation}
	respectively. When the optimization of parameter $\theta_{z}$ and $\theta_{w}$ are completed, a stochastic model of the child-mother system with uncertainty is obtained, 
	\begin{equation}
		\label{trained system}
		\begin{aligned}
			dz_t = f^{\theta_z*}(z_t, u_t)dt +g^{\theta_z*}(z_t, u_t)dB^{z}_t,\\
			dw_t = \alpha^{\theta_w*}(w_t, z_t)dt +\beta^{\theta_w*}(w_t, z_t)dB^{w}_t.
		\end{aligned}
	\end{equation}
	\section{RL DESIGN BASED ON $\mathcal{Y}$ OPERATOR IN CHILD-MOTHER SYSTEM}
	\label{rl design}
	Currently, the prevailing reinforcement learning methods are built based on AC framework. The reinforcement learning framework determined by pairing the AC framework with deep neural networks has the advantages that the traditional optimal control lacks, such as in the computation of the value function, the reinforcement learning can be used to estimate the action value function through Critic network, which avoids a large number of computations brought about by the inverse solving of the value function in the traditional optimal control (e.g., dynamic programming), and the difference is obvious when both states and actions are continuous, i.e., the state space and action space are infinite (In mathematics, the potential of the set of state spaces and action spaces is $\aleph_0$). Moreover, for systems in continuous state space and continuous action space, in the field of traditional optimal control, it can use the Hamilton-Jacobi-Belman equations for finding the optimal policy, but this method has strong constraints on the design of the system and the cost function. When the model of the controlled system is unknowable or not exhaustible, the traditional optimal control method may be difficult to solve this type of problem effectively.
	
	In summary, this chapter proposes a novel reinforcement learning framework based on the $\mathcal{Y}$ operator which mentioned in \ref{Theory} and applied to the child-mother system built in \ref{Model}. Moreover, the focus of this paper is not on how to calibrate a system built from SDEs using a large amount of data, but rather on the problem of solving the optimal control for that system once the calibration is complete. The calibration method proposed in \ref{Model} is a trivial method, and in the subsequent design of the reinforcement learning framework, by default, the child-mother system has been calibrated by the data, as shown in (\ref{trained system}).

	\subsection{Critic Network Design}
	\label{rl design critic}
	For the reinforcement learning based on AC framework, the value function of action at time $t$ has be defined as
	\begin{equation}
		\label{action value function}
		\begin{aligned}
			Q(z_t,u_t,w_t) = & \mathbb{E} \Bigg\{\int_{t}^{t+s}\gamma^{k-t}R(z_k,u_k,w_k)dk\\
			&+ \gamma^{t+s}V^{\theta_v}(z_{t+s},w_{t+s}) \Bigg\} ,
		\end{aligned}
	\end{equation}
	where $\theta_v$ is the network parmeter of $V^{\theta_v}(z_t, w_t)$. The $V^{\theta_v}(z_t, w_t)$ is the value function of states $z_t$ and $w_t$ at time $t$, representing the potential value of the state to future total rewards. $R(\cdot)$ is the reward function which is related to the states of child-system $z_t$ and mother-system $w_t$ as well as the control input $u_t$ and the functional form of $R(\cdot)$ should be in the form of a polynomial. $\gamma$ is the reward decay factor due to the fact that rewards at more future moments have a much lower impact on the current action than the current reward has on the action.
	
	For the Critic network in the reinforcement learning framework, the task of this network is to give an accurate estimate of the state value function $V^{\theta_v}(z_t, w_t)$ in the current state so that the Actor network can use the value function estimate $V^{\theta_v}(z_t, w_t)$ to select the optimal action that maximizes the reward, hence finding a globally optimal solution in the whole control process. According to the theory of reinforcement learning, when the parameter $\theta_v$ is trained, the value of $Q(z_t,u_t,w_t)$ is independent of the choice of $s$. It is natural to think of an action $u_t$ when it is uniquely determined at time $t$ when the state of the child-system is $z_t$ and the state of the mother system is $w_t$. Then the action's value function at that moment should be a deterministic one, i.e., in  (\ref{action value function}), the value function $Q(z_t,u_t,w_t)$ is independent of the choice of $s$. Therefore, the following equation holds.
	\begin{equation}
		\begin{aligned}
			0=&\frac{\partial Q(z_t,u_t,w_t)}{\partial s} \\
			0=&\mathbb{E}\Bigg\{ \gamma^{s}R(z_{t+s},u_{t+s},w_{t+s})+\gamma^{t+s}\ln\gamma V^{\theta_v}(z_{t+s},w_{t+s})\\
			&+\gamma^{t+s}\frac{dV^{\theta_v}(z_{t+s},w_{t+s})}{ds}\Bigg\}\\
		\end{aligned}
	\end{equation}
	
	Applying Proposition \ref{theory proposition1} to the above equation, the following equation holds.
	\begin{equation}
		\label{critic deduce}
		\begin{aligned}
			0=&\mathbb{E}\left\{ R\left(z_{t+s},u_{t+s},w_{t+s}\right)+\gamma^{t}\ln\gamma V^{\theta_v}\left(z_{t+s},w_{t+s}\right)\right\}\\
			&+\gamma^{t}\big[<\mathcal{Y}_ZV^{\theta_v}(z_{t+s},w_{t+s}),p(z_{t+s})>\\
			&+<\mathcal{Y}_W V^{\theta_v}(z_{t+s},w_{t+s}),p(w_{t+s})>\big]\\
		\end{aligned}
	\end{equation}
	
	Without loss of generality, assume for a moment that $t = 0$. Indeed, when the Critic network training converges, no change in $s$ will have an effect on $ Q(z_t,u_t,w_t)$, regardless of the time $t$ at any moment. Therefore, (\ref{critic deduce}) can be rewritten as
	\begin{equation}
		\label{critic loss function old}
		\begin{aligned}
			0&=\mathbb{E}\left\{ R\left(z_{s},u_{s},w_{s}\right)+\ln\gamma V^{\theta_v}\left(z_{s}, w_{s}\right)\right\}\\
			&+<\mathcal{Y}_ZV^{\theta_v}(z_{s},w_{s}),p(z_{s})>+<\mathcal{Y}_WV^{\theta_v}(z_{s},w_{s}),p(w_{s})>\\
		\end{aligned}
	\end{equation}
	The formula is only related to the reward value at the moment $s$ and the estimation of the value function by the Critic network at the moment $s$, and does not require a partial derivation of the value function, which is a nice property. The method can be remarkably effective when the value function network is not fully understood, such as in some application scenarios of inverse reinforcement learning(IRL), data-driven tasks, etc., where the network structure and parameters of the Critic network are not well known and only a large amount of data is available.
	
	According to the theory of $\mathcal{Y}$ operators in Definition \ref{Y operator}, the $\mathcal{Y}$ operator with respect to the stochastic process $\{z_s\}$ and $V^{\theta_v}(z_{s},w_{s})$ can be described by the following equation.
	\begin{equation}
		\begin{aligned}
			<\mathcal{Y}_Z&V^{\theta_v}(z_{s},w_{s}),p(z_{s})>=\\
			&\int_{\Omega} V^{\theta_v}(z_{s},w_{s}) \{((\Sigma^{Z}_s)^{-1}(z_s - E_{\mu,Z}^s))^T f^{\theta^*_z}(z_s,u_s)\\
			&-  \varLambda^T\mathcal{C}^1_{f,z,s} +\frac{1}{2} \varLambda^T \{  [-(\Sigma^{Z}_s)^{-1}\\
			&+(\Sigma^{Z}_s)^{-1}(z_s - E_{\mu,Z}^s)(z_s - E_{\mu,Z}^s)^T((\Sigma^{Z}_s)^{-1})^T]\\
			&\odot (g^{\theta^*_{z}}(z_s, u_s) g^{\theta^*_{z}}(z_s, u_s)^T) +  \mathcal{C}^2_{g,z,s}\}\varLambda \\
			&- \varLambda^T \mathcal{C}^1_{g,z,s}  (\Sigma^{Z}_s)^{-1}(z_s - E_{\mu,Z}^s) ]\}p(z_s)dz_s
		\end{aligned}
	\end{equation}
	where $\mathcal{C}^1_{f,z,s}$, $\mathcal{C}^1_{g,z,s}$ and $\mathcal{C}^2_{g,z,s}$ are the first-order partial differential equation matrix of the drift term function $f^{\theta^*_z}$, the first-order partial differential equation matrix of the diffusion term function $g^{\theta^*_{z}}$ and the second-order partial differential equation matrix of the diffusion term function $g^{\theta^*_{z}}$ of SDEs (\ref{trained system}), respectively. And $\mathcal{C}^1_{f,z,s}$, $\mathcal{C}^1_{g,z,s}$ and $\mathcal{C}^2_{g,z,s}$ are defined as
	\begin{equation}
		\begin{aligned}
			&\mathcal{C}^1_{f,z,s} = 
			\begin{bmatrix}
				\frac{\partial f^{\theta^*_z}_1(z_s,u_s)}{\partial z_{s,1}}\\
				\vdots\\
				\frac{\partial f^{\theta^*_z}_{d_n}(z_s,u_s)}{\partial z_{s,d_n}}
			\end{bmatrix}\\
			&\mathcal{C}^1_{g,z,s} = \\
			&\begin{bmatrix}
				\frac{\partial (g^{\theta^*_{z}}(g^{\theta^*_{z}})^T)_{1,1}}{\partial z_{s,1}}  & \cdots & \frac{\partial (g^{\theta^*_{z}}(g^{\theta^*_{z}})^T)_{1,d_n}}{\partial z_{s,1}}\\
				\frac{\partial (g^{\theta^*_{z}}(g^{\theta^*_{z}})^T)_{2,1}}{\partial z_{s,2}}  &\cdots &\frac{\partial (g^{\theta^*_{z}}(g^{\theta^*_{z}})^T)_{2,d_n}}{\partial z_{s,2}}\\
				\vdots  & \cdots & \vdots \\
				\frac{\partial (g^{\theta^*_{z}}(g^{\theta^*_{z}})^T)_{d_n,1}}{\partial z_{s,d_n}}  &\cdots & \frac{\partial (g^{\theta^*_{z}}(g^{\theta^*_{z}})^T)_{d_n,d_n}}{\partial z_{s,d_n}}
			\end{bmatrix}\\
			&\mathcal{C}^2_{g,z,s} = \\
			&\begin{bmatrix}
				\frac{\partial^2 (g^{\theta^*_{z}}(g^{\theta^*_{z}})^T)_{1,1}}{\partial z^2_{s,1}}  & \cdots & \frac{\partial^2 (g^{\theta^*_{z}}(g^{\theta^*_{z}})^T)_{1,d_n}}{\partial z_{s,1} \partial z_{s,d_n}}\\
				\frac{\partial^2 (g^{\theta^*_{z}}(g^{\theta^*_{z}})^T)_{2,1}}{\partial z_{s,2} \partial z_{s,1}}  &\cdots &\frac{\partial^2 (g^{\theta^*_{z}}(g^{\theta^*_{z}})^T)_{2,d_n}}{\partial z_{s,2} \partial z_{s,d_n}}\\
				\vdots  & \cdots & \vdots \\
				\frac{\partial^2 (g^{\theta^*_{z}}(g^{\theta^*_{z}})^T)_{d_n,1}}{\partial z_{s,d_n} \partial z_{s,1}}  &\cdots & \frac{\partial^2 (g^{\theta^*_{z}}(g^{\theta^*_{z}})^T)_{d_n,d_n}}{\partial z^2_{s,d_n} }
			\end{bmatrix}\\
		\end{aligned}
	\end{equation}
	Moreover, at moment $s$, the random variable $z_s$ obeys the Gaussian distribution and the expectation matrix $E_{\mu,Z}^s$ and covariance matrix $\Sigma^{Z}_s $ are the following form.
	\begin{equation}
		\begin{aligned}
			E_{\mu,Z}^s = &z_{s-}+f^{\theta^*_{z}}(z_{s-}, u_{s-})ds,\\
			\Sigma^{Z}_s = &g^{\theta^*_{z}}(z_{s-}, u_{s-}) g^{\theta^*_{z}}(z_{s-}, u_{s-})^Tds,\\
		\end{aligned}
	\end{equation}
	where $z_{s-}$ and $u_{s-}$ are the child-system states and the control input at the last moment of time $s$. It can also be thought of as the historical information available to the entire child-mother system at moment $s$ (i.e., the system history information collected by the sensors).
	
	Likewise, the $\mathcal{Y}$ operator with respect to the stochastic process $\{w_s\}$ and $V^{\theta_v}(z_{s},w_{s})$ can be described by the following equation.
	\begin{equation}
		\begin{aligned}
			<\mathcal{Y}_W&V^{\theta_v}(z_{s},w_{s}),p(w_{s})> =\\
			&\int_{\Omega} V^{\theta_v}(z_{s},w_{s}) \{((\Sigma^{W}_s)^{-1}(w_s - E_{\mu,W}^s))^T \alpha^{\theta^*_w}(w_s,z_s)\\
			&-  \varLambda^T\mathcal{C}^1_{\alpha,w,s} +\frac{1}{2} \varLambda^T \{  [-(\Sigma^{W}_s)^{-1}\\
			&+(\Sigma^{W}_s)^{-1}(w_s - E_{\mu,W}^s)(w_s - E_{\mu,W}^s)^T((\Sigma^{W}_s)^{-1})^T]\\
			&\odot (\beta^{\theta^*_{w}}(w_s, z_s) \beta^{\theta^*_{w}}(w_s, z_s)^T) +  \mathcal{C}^2_{\beta,w,s}\}\varLambda \\
			&- \varLambda^T \mathcal{C}^1_{\beta,w,s}  (\Sigma^{W}_s)^{-1}(w_s - E_{\mu,W}^s) ]\}p(w_s)dw_s
		\end{aligned}
	\end{equation}
	where $\mathcal{C}^1_{\alpha,w,s}$, $\mathcal{C}^1_{\beta,w,s}$ and $\mathcal{C}^2_{\beta,w,s}$ are the first-order partial differential equation matrix of the drift term function $\alpha^{\theta^*_w}$, the first-order partial differential equation matrix of the diffusion term function $\beta^{\theta^*_{w}}$ and the second-order partial differential equation matrix of the diffusion term function $\beta^{\theta^*_{w}}$ of SDEs (\ref{trained system}), respectively. And $\mathcal{C}^1_{\alpha,w,s}$, $\mathcal{C}^1_{\beta,w,s}$ and $\mathcal{C}^2_{\beta,w,s}$ are defined as
	\begin{equation}
		\begin{aligned}
			&\mathcal{C}^1_{\alpha,w,s} = 
			\begin{bmatrix}
				\frac{\partial \alpha_1(w_s,z_s)}{\partial w_{s,1}}\\
				\vdots\\
				\frac{\partial \alpha_{d_n}(w_s,z_s)}{\partial w_{s,d_n}}
			\end{bmatrix}\\
			&\mathcal{C}^1_{\beta,w,s} = \\
			&\begin{bmatrix}
				\frac{\partial (\beta^{\theta^*_{w}}(\beta^{\theta^*_{w}})^T)_{1,1}}{\partial w_{s,1}}  & \cdots & \frac{\partial (\beta^{\theta^*_{w}}(\beta^{\theta^*_{w}})^T)_{1,d_n}}{\partial w_{s,1}}\\
				\frac{\partial (\beta^{\theta^*_{w}}(\beta^{\theta^*_{w}})^T)_{2,1}}{\partial w_{s,2}}  &\cdots &\frac{\partial (\beta^{\theta^*_{w}}(\beta^{\theta^*_{w}})^T)_{2,d_n}}{\partial w_{s,2}}\\
				\vdots  & \cdots & \vdots \\
				\frac{\partial (\beta^{\theta^*_{w}}(\beta^{\theta^*_{w}})^T)_{d_n,1}}{\partial w_{s,d_n}}  &\cdots & \frac{\partial (\beta^{\theta^*_{w}}(\beta^{\theta^*_{w}})^T)_{d_n,d_n}}{\partial w_{s,d_n}}
			\end{bmatrix}\\
			&\mathcal{C}^2_{\beta,w,s} = \\
			&\begin{bmatrix}
				\frac{\partial^2 (\beta^{\theta^*_{w}}(\beta^{\theta^*_{w}})^T)_{1,1}}{\partial w^2_{s,1}}  & \cdots & \frac{\partial^2 (\beta^{\theta^*_{w}}(\beta^{\theta^*_{w}})^T)_{1,d_n}}{\partial w_{s,1} \partial w_{s,d_n}}\\
				\frac{\partial^2 (\beta^{\theta^*_{w}}(\beta^{\theta^*_{w}})^T)_{2,1}}{\partial w_{s,2} \partial w_{s,1}}  &\cdots &\frac{\partial^2 (\beta^{\theta^*_{w}}(\beta^{\theta^*_{w}})^T)_{2,d_n}}{\partial w_{s,2} \partial w_{s,d_n}}\\
				\vdots  & \cdots & \vdots \\
				\frac{\partial^2 (\beta^{\theta^*_{w}}(\beta^{\theta^*_{w}})^T)_{d_n,1}}{\partial w_{s,d_n} \partial w_{s,1}}  &\cdots & \frac{\partial^2 (\beta^{\theta^*_{w}}(\beta^{\theta^*_{w}})^T)_{d_n,d_n}}{\partial w^2_{s,d_n} }
			\end{bmatrix}\\
		\end{aligned}
	\end{equation}
	Moreover, the $E_{\mu,W}^s$ and $\Sigma^{W}_s $ are defined as
	\begin{equation}
		\begin{aligned}
			E_{\mu,W}^s = &w_{s-}+\alpha^{\theta^*_{w}}(w_{s-}, z_{s-})ds,\\
			\Sigma^{W}_s = &\beta^{\theta^*_{w}}(w_{s-}, z_{s-}) \beta^{\theta^*_{w}}(w_{s-}, z_{s-})^Tds,\\
		\end{aligned}
	\end{equation}
	where $w_{s-}$ and $z_{s-}$ are the states of mother-system and the states of child-system states at the last moment of time $s$. 
	
	The goal of training for the Critic network is essentially to find the optimal parameter $\theta^*_{v}$ such that (\ref{critic loss function old}) holds at any $s$ moments.
	Therefore, the loss function of the Critic network can be set intuitively
	\begin{equation}
		\begin{aligned}
			\mathcal{L}_C(\theta_v) &=\int_{0}^{T} \Big(R(z_{s},u_{s},w_{s})+\ln\gamma V^{\theta_v}(z_{s},w_{s})\\
			&+\mathcal{Y}_Z V^{\theta_v}(z_{s},w_{s})+\mathcal{Y}_W V^{\theta_v}(z_{s},w_{s})\Big)^2 ds.\\
		\end{aligned}
	\end{equation}
	More generally, at any given time $t$, all selections of $s$ should make (\ref{action value function}) a fixed value, and the more generalized form of the loss function should be reformulated as
	\begin{equation}
		\begin{aligned}
			&\mathcal{L}_C(\theta_v) =\int_{0}^{T}\int_{0}^{T-t} \Big(R(z_{t+s},u_{t+s},w_{t+s})\\
			&+\gamma^t [ \ln\gamma V^{\theta_v}(z_{t+s},w_{t+s})+\mathcal{Y}_ZV^{\theta_v}(z_{t+s},w_{t+s})\\
			&+\mathcal{Y}_WV^{\theta_v}(z_{t+s},w_{t+s})]\Big)^2 dsdt.\\
		\end{aligned}
	\end{equation}
	
	When the Critic network is well trained by a large amount of data collected offline or online, the network can give, at a given moment, the value function corresponding to the current moment. Denote $\theta^*_v$ as the network parameter when the Critic network is well trained.
	
	\subsection{Actor Network Design}
	For reinforcement learning based on the AC architecture, the goal of the Actor network is to output the optimal action in the current state after the value function of the current state is given by the Critic network, which maximizes the final reward.
	The action ouputted by Actor network can be represented as
	\begin{equation}
		u_t = \mu^{\theta_u}(z_t, w_t)+\sigma^{\theta_u}(z_t, w_t)B^{u}_{\Delta},
	\end{equation}
	where $ \mu^{\theta_u}:\mathbb{R}^{d_n} \times \mathbb{R}^{s_d}\rightarrow \mathbb{R}^{d_m}$, $\sigma^{\theta_u}:\mathbb{R}^{d_n} \times \mathbb{R}^{s_d}\rightarrow \mathbb{R}^{d_m\times d_m}$ are the learning-based function of controller which can be determined by neural network with the parameter $\theta_u$.  $B^{u}_{\Delta}$ is a $d_m$-dimensional random vector. The random variables in random vector $B^{u}_{\Delta}$ are independently and identically distributed random variables, all of which follow the Gaussian distribution with expectation $0$ and variance $\Delta T$(time interval of control inputs). Therefore, at time $t$, the action ouputted by Actor network obeys the following distribution:
	\begin{equation}
		\label{u_distribution}
		u_t \sim \mathcal{N}(\mu^{\theta_u}(z_t, w_t), \Delta T\cdot\sigma^{\theta_u}(z_t, w_t)\times \sigma^{\theta_u}(z_t, w_t)^T),
	\end{equation}
	where $\mu^{\theta_u}(z_t, w_t)$ is the expectation vector and $\Delta T\cdot\sigma^{\theta_u}(z_t, w_t)\times \sigma^{\theta_u}(z_t, w_t)^T$ is the covariance matrix.

	The key to improve the policy of Actor is to find the optimal parmeter $\theta^*_u$ of Actor network that maximize the reward in reinforcement learning. It means that find
	\begin{equation}
		\label{actor design old}
		\begin{aligned}
			\theta^*_u = \argmin\limits_{\theta_u} \int_{0}^{T}&-\mathbb{E}\{\log\pi^{\theta_u}(u_t\vert z_t, w_t)\}\hat{A}_tdt,\\
			\hat{A}_t = &Q(z_t,u_t,w_t)-V^{\theta^*_v}(z_{t},w_{t})
		\end{aligned}
	\end{equation}
	where $\pi^{\theta_u}(u_t\vert z_t, w_t)$ is the conditional distribution of action $u_t$ when the states of child-system $z_t$ and states of mother-system $w_t$ is well known at time $t$. $\hat{A}_t$ is the advantage function, which indicates whether the current action is a favorable action for the current value function, when $\hat{A}_t$ is greater than $0$, it means that the current action is an advantageous action, on the contrary, the action is a worse action.

	According to the Actor design in TRPO\cite{schulman2015trust} and PPO\cite{schulman2017proximal}, (\ref{actor design old}) can be reformulated as
	\begin{equation}
		\begin{aligned}
			\theta^*_u = \argmin\limits_{\theta_u} \int_{0}^{T}&-\mathbb{E}\bigg[\frac{\pi^{\theta_u}(u_t\vert z_t, w_t)}{\pi^{\theta_u}_{old}(u_t\vert z_t, w_t)}\bigg]\hat{A}_tdt,\\
		\end{aligned}
	\end{equation}
	where $\pi^{\theta_u}_{old}$ represents the conditional probability of the action under the old strategy.
	
	Most current studies comparing the advantages of algorithms use the PPO algorithm as a baseline for comparison. The subsequent experimental results in this paper are also compared with PPO, therefore, the design of Actor takes the same design method as the PPO algorithm. 
	Thus, define $\mathcal{L}_a(\theta_u)$ as the loss function of the Actor as follows
	\begin{equation}
		\label{J_actor}
		\begin{aligned}
			\mathcal{L}_A(\theta_u)=\int_{0}^{T}-\min(\mathcal{R}^{\theta_u}_t&\hat{A}_t, \text{clip}(\mathcal{R}^{\theta_u}_t,1-\epsilon,1+\epsilon)\hat{A}_t)dt\\
			\mathcal{R}^{\theta_u} _t=& \frac{\pi^{\theta_u}(u_t\vert z_t, w_t)}{\pi^{\theta_u}_{old}(u_t\vert z_t, w_t)}
		\end{aligned}
	\end{equation}
	where "clip" is a function that restricts $\mathcal{R}^{\theta_u}_t$ to the middle of $1-\epsilon$ and $1+\epsilon$. $\epsilon$ is a hyperparameterization.
	
	In practice, during reinforcement learning training, the Actor network and the Critic network are not trained first after the Critic network is trained and then the Actor network is trained. Instead, after the agent interacts with the environment for a fixed number of episodes, the data from these episodes are used, either synchronously or asynchronously, to train the Actor network and Critic network.

	\section{ILLUSTRATIVE EXAMPLES}
	\label{simulation}
	Comparing the YORL method with the TSRL method, both YORL and TSRL use a design consistent with the PPO method in the design of the Actor network. However, in the design of the Critic network, YORL uses the loss function update network described in \ref{rl design critic}, and TSRL uses the loss function update network described in (\ref{loss tradition}). In fact TSRL's AC network design is all consistent with PPO, simply by applying the PPO algorithm to a system modeled by SDEs.
	\subsection{Linear Numerical Examples}
	
	Considering a linear child-mother system
	\begin{equation}
		\begin{aligned}
			\begin{bmatrix}
				dz^{(1)}_t\\
				dz^{(2)}_t
			\end{bmatrix}
			&=\begin{bmatrix}
				z^{(2)}_t\\
				u_t
			\end{bmatrix}dt\\
			&+\begin{bmatrix}
				0.1z^{(2)}_t+0.1 & 0 \\
				0 & 0.1z^{(2)}_t+0.3
			\end{bmatrix}
			\begin{bmatrix}
				dB^{z,1}_t\\
				dB^{z,2}_t
			\end{bmatrix}\\
			\begin{bmatrix}
				dw^{(1)}_t\\
				dw^{(2)}_t
			\end{bmatrix}
			&=\begin{bmatrix}
				w^{(2)}_t\\
				z^{(2)}_t-w^{(2)}_t
			\end{bmatrix}dt\\
			&+\begin{bmatrix}
				0.2w^{(2)}_t+0.3 & 0 \\
				0 & 0.1w^{(2)}_t +0.2
			\end{bmatrix}
			\begin{bmatrix}
				dB^{w,1}_t\\
				dB^{w,2}_t
			\end{bmatrix}\\
		\end{aligned}
	\end{equation}
	The reward function can be determined by
	\begin{equation}
		\begin{aligned}
			R(z_t,w_t,u_t) = c_1 e^{-c_2[\abs{z^{(1)}_t-w^{(1)}_t}-c_3]^2}+c_4u^2_t
		\end{aligned}
	\end{equation}
	where $c_1, c_2, c_4$ are the constant and $c_3$ is the magnitude of the distance expected to be maintained by $z^{(1)}_t$ and $w^{(1)}_t$.
	The goal of optimal control is to find the optimal control policy that maximizes the total reward while satisfying the constraint equations.
	\begin{equation}
		\begin{aligned}
			&\max \int_{0}^{T}R(z_t,w_t,u_t)dt\\
			s.t. \quad&u_t\in [-2,2], z^{(1)}_t, w^{(1)}_t\in [0,300],\\
			& z^{(2)}_t, w^{(2)}_t \in [0,10], w^{(1)}_t-z^{(1)}_t\geq 0,\\
			& z_0 = \begin{bmatrix}
				0\\2
			\end{bmatrix}, w_0 = \begin{bmatrix}
			8\\4
			\end{bmatrix}
		\end{aligned}
	\end{equation}
	The specific design details of the reinforcement learning framework are shown in Table  \ref{Simulation1}. 
	\begin{table*}[htbp]
		\caption{SPECIFICATION OF THE RL DESIGN IN LINEAR NUMERICAL EXAMPLE}
		\label{Simulation1}
		\setlength{\tabcolsep}{9.8mm}
		\renewcommand\arraystretch{1.2}
		\begin{tabular}{|c|c|c|c|}
			\hline
			\multicolumn{4}{|c|}{RL Design}\\
			\hline
			Actor Network& $1\times\text{hidden dimension}\times1$&$c_1$& $5$\\
			\hline
			Critic Network& $1\times\text{hidden dimension}\times\text{hidden dimension}\times1$&$c_2$& $0.1$\\
			\hline
			$\gamma$& $0.9999$&$c_3$ & $10$\\
			\hline
			$\epsilon$& $0.2$&$c_4$ & $-0.2$\\
			\hline
		\end{tabular}
	\end{table*}
	
	A training comparison between the reinforcement learning method using the $\mathcal{Y}$ operator-based design of the loss function of the Critic network(YORL) and the TSRL method on this linear numerical example is shown in Fig.\ref{YORL v.s PPO} .
	\begin{figure}[htbp]
		\centering
		\includegraphics[width=\columnwidth]{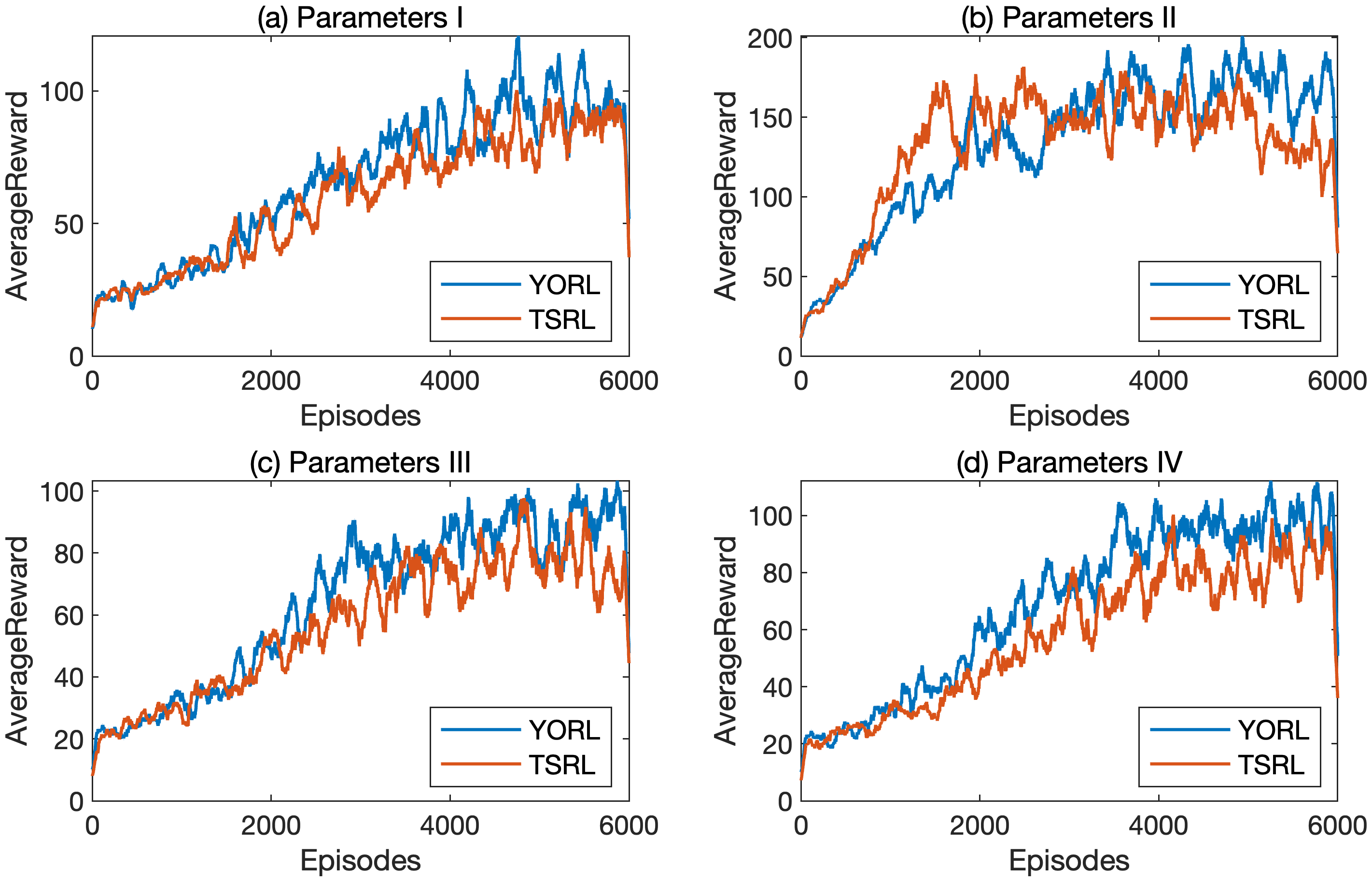}
		\caption{In subfigure (a), the hidden layer dimension of both the Actor and Critic networks for both TSRL and YORL is $32$ and the activation function used is the Sigmoid function. In subfigure (b), the hidden layer dimension is $128$ and the activation function used is Sigmoid. In subfigure (c), the hidden layer dimension is $32$ and the activation function used is Relu. In subfigure (d), the hidden layer dimension is $32$ and the activation function used is tanh.}
		\label{YORL v.s PPO}
	\end{figure}
	
	It is clear that the training performance of the YORL method using the $\mathcal{Y}$ operator is outperformed by the TSRL method in this linear child-mother system and the YORL method is able to achieve higher rewards than the TSRL method  at the time of training convergence.
	This is due to the fact that YORL incorporates the stochasticity of the child-mother system modeled by SDEs into the design of the Critic network, and therefore is outperforming the TSRL, a method that does not take into account the stochasticity in the model, in terms of the reward function.
	It nicely addresses the deficiencies of the current study as articulated in \textbf{Problem I}.
	
	As can be seen in Fig.\ref{YORL v.s PPO}, when the Relu function is used for the activation function of the network, there is a certain degree of reduction in the reward function for both the YORL and TSRL methods, in consistent with what was described in \textbf{Problem II}. However, the value of reward function of the YORL method is still higher than that of the conventional TSRL method because the $\mathcal{Y}$ operator avoids the problem of calculating the partial differentiation of the state value function.
	
	The experimental results reveal that the network with Relu activation function has faster convergence characteristics when the hidden layer dimensions are the same. While the network with Sigmoid activation function is slower in convergence, the peak size of the reward after convergence performs better compared to other networks. Compared to Relu and Sigmoid, the network with tanh activation function has a more moderate performance. In addition, the TSRL method  converges faster than YORL when the hidden layer dimension increases, but the average reward after convergence is inferior to the YORL method.
	\subsection{Nonlinear Numerical Examples}
	Considering a nonlinear child-mother system
	\begin{equation}
		\begin{aligned}
			\begin{bmatrix}
				dz^{(1)}_t\\
				dz^{(2)}_t
			\end{bmatrix}
			&=\begin{bmatrix}
				z^{(2)}_t\\
				u_t- (0.1z^{(2)}_t)^2- 0.5\sin(z^{(1)}_t)
			\end{bmatrix}dt\\
			&+\begin{bmatrix}
				0.1z^{(2)}_t+0.1 & 0 \\
				0 & 0.1z^{(2)}_t+0.3\\
			\end{bmatrix}
			\begin{bmatrix}
				dB^{z,1}_t\\
				dB^{z,2}_t\\
			\end{bmatrix}\\
			\begin{bmatrix}
				dw^{(1)}_t\\
				dw^{(2)}_t
			\end{bmatrix}
			&=\begin{bmatrix}
				w^{(2)}_t\\
				z^{(2)}_t-w^{(2)}_t-(0.1w^{(2)}_t)^2- 0.5\sin(w^{(1)}_t)
			\end{bmatrix}dt\\
			&+\begin{bmatrix}
				0.2w^{(2)}_t+0.3 &0 \\
				0 & 0.1w^{(2)}_t +0.2
			\end{bmatrix}
			\begin{bmatrix}
				dB^{w,1}_t\\
				dB^{w,2}_t
			\end{bmatrix}\\
		\end{aligned}
	\end{equation}
	
	The reward function can be determined by
	\begin{equation}
		\begin{aligned}
			R(z_t,w_t,u_t) = c_1 e^{-c_2[\abs{z^{(1)}_t-w^{(1)}_t}-c_3]^2}+c_4u^2_t
		\end{aligned}
	\end{equation}
	where $c_1, c_2, c_4$ are the constant and $c_3$ is the magnitude of the distance expected to be maintained by $z^{(1)}_t$ and $w^{(1)}_t$.
	The goal of optimal control is to find the optimal control policy that maximizes the total reward while satisfying the constraint equations.
	\begin{equation}
		\begin{aligned}
			&\max \int_{0}^{T}R(z_t,w_t,u_t)dt\\
			s.t. \quad&u_t\in [-2,2], z^{(1)}_t, w^{(1)}_t\in [0,300],\\
			& z^{(2)}_t, w^{(2)}_t \in [0,10],   w^{(1)}_t-z^{(1)}_t\geq 0,\\
			&z_0 = \begin{bmatrix}
				0\\2
			\end{bmatrix}, w_0 = \begin{bmatrix}
				8\\4
			\end{bmatrix}
		\end{aligned}
	\end{equation}
	The specific design details of the reinforcement learning framework are shown in Table  \ref{Simulation1} as well.
	
	\begin{figure}[htbp]
		\centering
		\includegraphics[width=\columnwidth]{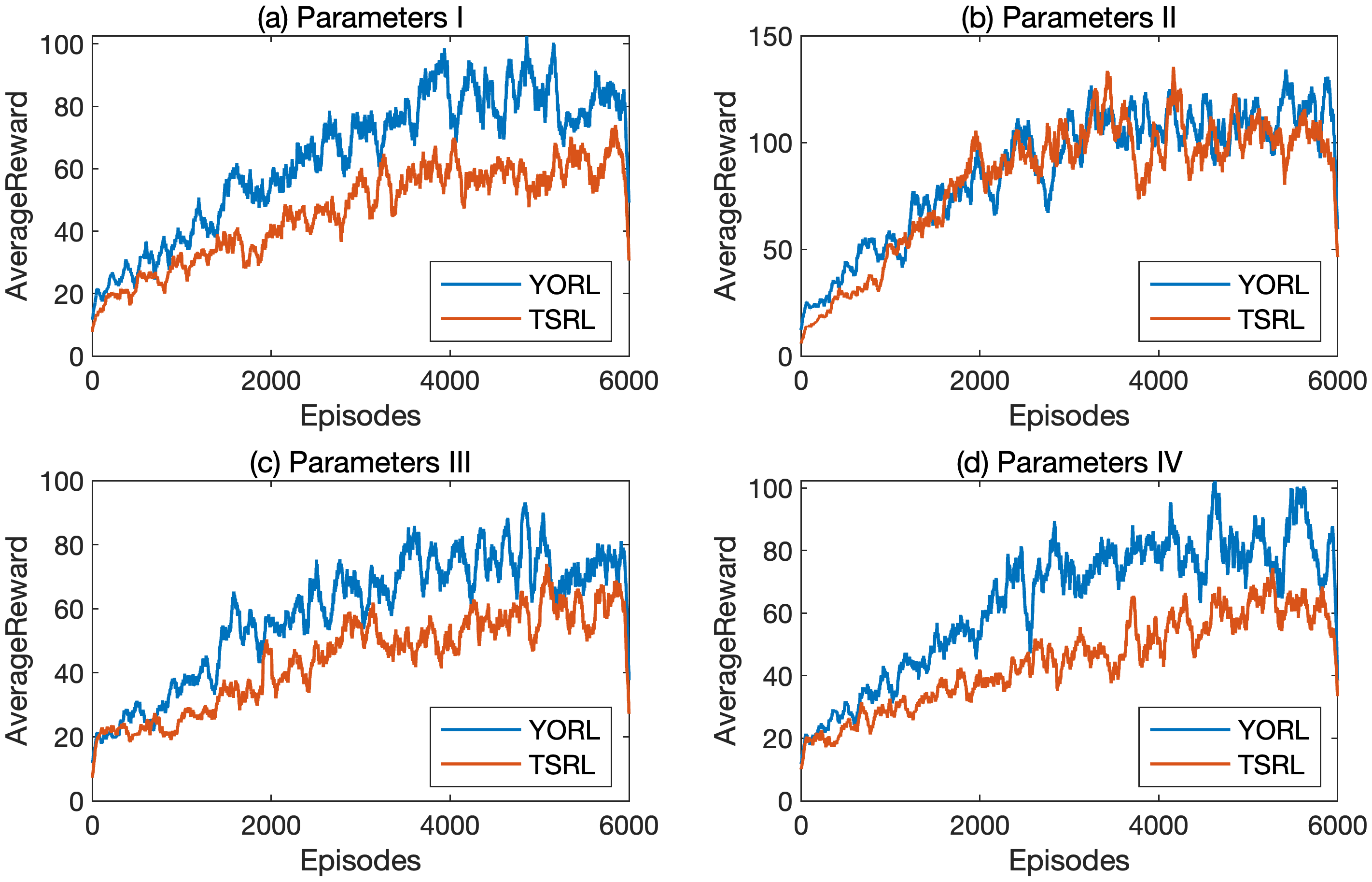}
		\caption{In subfigure (a), the hidden layer dimension of both the Actor and Critic networks for both TSRL and YORL is $32$ and the activation function used is the Sigmoid function. In subfigure (b), the hidden layer dimension is $128$ and the activation function used is Sigmoid. In subfigure (c), the hidden layer dimension is $32$ and the activation function used is Relu. In subfigure (d), the hidden layer dimension is $32$ and the activation function used is tanh.}
		\label{YORL v.s PPO Nonlinear}
	\end{figure}
	
	The simulation results for this nonlinear child-mother system are shown in Fig.\ref{YORL v.s PPO Nonlinear} . Similar to the linear child-mother system, the dimension of the hidden layer used in subfigure (a) is $32$, and the activation function is a Sigmoid function. It can be found that the average reward of YORL is totally higher than the existing TSRL method throughout the training process. When the hidden layer dimension is increased to $128$ and the activation function is unchanged, as shown in subfigure (b), both the TSRL method  and the YORL are improved, which is a positive result of the increase of neurons in the neural network. When the training converges, it can be noticed that the average reward of the YORL method is slightly better than the TSRL method.
	When the dimension of the hidden layer is unchanged and the activation function is replaced with the Relu function, as shown in subfigure (c), the advantage of YORL is still obvious.
	Finally, in subfigure (d), the hidden layer dimension is still $32$ and the activation function is replaced with the tanh function. YORL continues to perform remarkably well.
	
	From the two numerical examples, it can be seen that the YORL method shows better control performance than the TSRL method, both in systems modeled by linear SDEs and in systems modeled by nonlinear SDEs.The YORL method solves \textbf{Problem I} well and gives a feasible approach in some special scenarios described in \textbf{Problem II}. The YORL method can provide an illuminating proposal for researchers when the problem they are studying is in the special scenarios described in \textbf{Problem II}.
	\section{CONCLUSIONS}
	\label{conclusion}
	It can be found that YORL method outperforms TSRL method in both linear child-mother systems and nonlinear child-mother systems, which is a well-suited answer to the \textbf{Problem I} posed in the previous section. This demonstrates the necessity of taking the stochasticity in the system into account in the design of the loss function when designing Critic networks.
	
	When the states in the child-mother system are all considered as stochastic processes represented by SDEs, the state-value function is essentially a functional of the stochastic process. The characteristic operator of the It\^{o} diffusion process $\mathcal{A}$ transforms the problem of solving the derivative of this functional with respect to time into a problem of solving the partial derivative of this functional with respect to its independent variables. The $\mathcal{Y}$ operator proposed in this paper can transform the problem of solving the derivative of the functional with respect to time into the problem of solving the partial derivatives of the drift term function and the diffusion term function of SDEs of the stochastic process with respect to its independent variables.
	
	For solving the transformed problem of the $\mathcal{A}$ operator, it is required that the functional of the stochastic process is known and there is a strict requirement on the continuity of the functional (at least the second-order partial derivatives are required to be continuous). However, in many scenarios, the functional may not be known, and only the data for the inputs and outputs of that functional are available. In designing a Critic network for reinforcement learning, some activation functions such as Relu may not satisfy the continuity requirement, suggesting that the choice of the activation function may result in the functional of the value function that does not satisfy the conditions for the application of the $\mathcal{A}$ operator, which is a significant limitation for the design of the network.
	The $\mathcal{Y}$ operator proposed in this paper transforms this problem into a problem of solving the partial derivatives of the drift term function and the diffusion term function of SDEs obeyed by the random variable itself with respect to its independent variables. The continuity requirement of the pairwise value function is transformed into the continuity requirement of the drift term function and the diffusion term function of SDEs obeyed by the random variable itself.
	
	The proposed $\mathcal{Y}$ operator allows more selectivity in the design of the Critic network when the SDEs obeyed by the system state satisfy the continuity condition. The activation functions of network such as the Relu function can be chosen. Moreover, the $\mathcal{Y}$ operator can be used as a reference tool in areas such as inverse reinforcement learning, offline reinforcement learning, to provide researchers with inspiring proposals. Therefore, the proposed $\mathcal{Y}$ operator effectively addresses the shortcomings articulated in \textbf{Problem II}.
	
	\section{APPENDIX}
	\subsection{The Proof of Proposition \ref{theory proposition1}}
	\begin{proof}
		By denoting the dimension of $X^{(1)}_{t}$ as $d_{x^1}$, the dimension of $X^{(2)}_{t}$ as $d_{x^2}$, and so on, and the dimension of $X^{(m)}_{t}$ as $d_{x^m}$, there are, in essence, $\sum_{i=1}^{m}d_{x^i}$ independent variables in the function $\Psi(X^{(1)}_{t},X^{(2)}_{t}\dots X^{(m)}_{t})$. Denoted $\sum_{i=1}^{m}d_{x^i}$ as $d_M$
		
		According to the Taylor expansion formula for multivariate functions, (\ref{theory proposition1 eq}) in Proposition \ref{theory proposition1} can be written in the following form:
		\begin{equation}
			\label{proposition1 proof}
			\begin{aligned}
				&\mathbb{E}\left\{\frac{d \Psi(X^{(1)}_{t},X^{(2)}_{t}\dots X^{(m)}_{t})}{dt}\right\} = \\
				&\mathbb{E}\Bigg\{ \sum_{i=1}^{m}\sum_{j=1}^{d_{x^i}}\frac{d \Psi(X^{(1)}_{t},X^{(2)}_{t}\dots X^{(m)}_{t})}{d X^{(i)}_{t,j}}h^{(i)}_{j}(X^{(1)}_{t},X^{(2)}_{t}\dots X^{(m)}_{t})\\
				& + \frac{1}{2}\sum_{i=1}^{m}\sum_{j=1}^{d_{x^i}} \sum_{k=1}^{m}\sum_{l=1}^{d_{x^k}} \frac{d \Psi(X^{(1)}_{t},X^{(2)}_{t}\dots X^{(m)}_{t})}{d X^{(i)}_{t,j} d X^{(k)}_{t,l}}\frac{d X^{(i)}_{t,j}}{dt}\frac{d X^{(k)}_{t,l}}{dt}  \Bigg\}
			\end{aligned}
		\end{equation}
		For equation $\mathbb{E}\left\{ \frac{d X^{(i)}_{t,j}}{dt}\frac{d X^{(k)}_{t,l}}{dt} \right\}$,
		\begin{itemize}
			\item Case1($i=k$): 
			\begin{equation}
				\begin{aligned}
					&\mathbb{E}\left\{ \frac{d X^{(i)}_{t,j}}{dt}\frac{d X^{(k)}_{t,l}}{dt} \right\} =\\
					&\left(H^{(i)}\left(X^{(1)}_{t}\dots X^{(m)}_{t}\right)H^{(i)}\left(X^{(1)}_{t}\dots X^{(m)}_{t}\right)^T\right)_{j,l}
				\end{aligned}
			\end{equation}
			\item Case1($i\neq k$): 
			Since $dW^{(1)}_{t}$, $dW^{(2)}_{t}$ $\dots$ $dW^{(m)}_{t}$ are independent of each other, it is clear that the following equation holds
			\begin{equation}
				\begin{aligned}
					&\mathbb{E}\left\{ \frac{d X^{(i)}_{t,j}}{dt}\frac{d X^{(k)}_{t,l}}{dt} \right\} = 0
				\end{aligned}
			\end{equation}
		\end{itemize}
		Therefore, the (\ref{proposition1 proof}) can be reformulated as
		\begin{equation}
			\begin{aligned}
				&\mathbb{E}\left\{\frac{d \Psi(X^{(1)}_{t},X^{(2)}_{t}\dots X^{(m)}_{t})}{dt}\right\} = \\
				&\mathbb{E}\Bigg\{ \sum_{i=1}^{m}\sum_{j=1}^{d_{x^i}}\frac{d \Psi(X^{(1)}_{t},X^{(2)}_{t}\dots X^{(m)}_{t})}{d X^{(i)}_{t,j}}h^{(i)}_{j}(X^{(1)}_{t},X^{(2)}_{t}\dots X^{(m)}_{t})\\
				& + \frac{1}{2}\sum_{i=1}^{m}\sum_{j=1}^{d_{x^i}}\sum_{l=1}^{d_{x^i}} \frac{d \Psi(X^{(1)}_{t},X^{(2)}_{t}\dots X^{(m)}_{t})}{d X^{(i)}_{t,j} d X^{(i)}_{t,l}}\times \\
				&\left(H^{(i)}\left(X^{(1)}_{t}\dots X^{(m)}_{t}\right)H^{(i)}\left(X^{(1)}_{t}\dots X^{(m)}_{t}\right)^T\right)_{j,l}  \Bigg\}
			\end{aligned}
		\end{equation}
		As we known, 
		\begin{equation}
			\begin{aligned}
				<\mathcal{A}&_{X^{(i)}}\Psi, p(X^{(i)}_{t})> = \\
				&\mathbb{E}\Bigg\{ \sum_{j=1}^{d_{x^i}}\frac{d \Psi(X^{(1)}_{t},X^{(2)}_{t}\dots X^{(m)}_{t})}{d X^{(i)}_{t,j}}h^{(i)}_{j}(X^{(1)}_{t},X^{(2)}_{t}\dots X^{(m)}_{t})\\
				& + \frac{1}{2}\sum_{j=1}^{d_{x^i}}\sum_{l=1}^{d_{x^i}} \frac{d \Psi(X^{(1)}_{t},X^{(2)}_{t}\dots X^{(m)}_{t})}{d X^{(i)}_{t,j} d X^{(i)}_{t,l}}\times \\
				&\left(H^{(i)}\left(X^{(1)}_{t}\dots X^{(m)}_{t}\right)H^{(i)}\left(X^{(1)}_{t}\dots X^{(m)}_{t}\right)^T\right)_{j,l}  \Bigg\}
			\end{aligned}
		\end{equation}
		Obviously the following equation holds
		\begin{equation}
			\begin{aligned}
				\mathbb{E}\left\{\frac{d \Psi(X^{(1)}_{t},X^{(2)}_{t}\dots X^{(m)}_{t})}{dt}\right\} = \sum_{i=1}^{m}<\mathcal{A}_{X^{(i)}}\Psi, p(X^{(i)}_{t})>
			\end{aligned}
		\end{equation}
		Also due to Theorem \ref{operator equal theorem}, the $\mathcal{A}$ operator is equivalent to the $\mathcal{Y}$ operator, so Proposition \ref{theory proposition1} holds.
	\end{proof}
	\subsection{The Proof of Proposition \ref{theory proposition2}}
	\begin{proof}
		According to Theorem \ref{operator equal theorem}, the (\ref{theory proposition2 eq1}) naturally holds.
		
		And according to Kolmogorov forward equation, 
		\begin{equation}
			\frac{d P(X_t)}{dt} = \mathcal{A}^*_X p(X_t)
		\end{equation}
		Meanwhile, according to Proposition \ref{theory proposition3}, 
		\begin{equation}
			\begin{aligned}
				<\mathcal{A}_X\Psi(X_t), p(X_t)> &= <\mathcal{Y}_X\Psi(X_t), p(X_t)>\\
				&\downarrow\\
				 <\Psi(X_t), \mathcal{A}^*_Xp(X_t)> &= <\Psi(X_t), \mathcal{Y}_Xp(X_t)>
			\end{aligned}
		\end{equation}
		Therefore, the following equation holds
		\begin{equation}
			\mathcal{A}^*_Xp(X_t) = \mathcal{Y}_Xp(X_t)
		\end{equation}
	\end{proof}
	
	\bibliography{USC.bib}
	\bibliographystyle{elsarticle-num}
	
\end{document}